\documentclass{article}[11pt] 
\usepackage[vmargin=1.3in,hmargin=0.8in]{geometry}

\usepackage{amsmath,amssymb,amsfonts,graphicx,amsthm,nicefrac,mathtools,bm}
\usepackage{hyperref} 
\usepackage[numbers]{natbib}
\usepackage[english]{babel}
\usepackage{times}
\usepackage[T1]{fontenc}
\usepackage{bbm,mdframed}
\usepackage[dvipsnames]{xcolor}
\usepackage{xspace}
\usepackage{rotating,multirow} 
\usepackage{tikz}
\usetikzlibrary{arrows}
\usetikzlibrary{shapes} 
\usepackage{algpseudocode,algorithm,algorithmicx} 
\usepackage{fancyhdr} 
\usepackage{tabularx}
\usepackage{subcaption}

\usepackage{float}
\usepackage{soul}
\usepackage{color}
\usepackage{enumitem}
\usepackage{comment}
\setlength{\headheight}{35pt}
\pagestyle{fancy}
\fancyhf{}
\lhead{\thepage}
\chead{L-Shapley and C-Shapley: Efficient Model Interpretation for Structured Data} 
\date{}
\author{\theauthor}
\title{\thetitle} 
\fancypagestyle{plain}{%
  \fancyhf{}%
  \lhead{\thepage}
}

\newcommand{\theauthor}{}
\newcommand{\thetitle}{L-Shapley and C-Shapley:\\ Efficient Model Interpretation for Structured Data}

\newcommand*\nbhd[2]{\mathcal N_{#1}(#2)}

\newcommand{\Prob}{\ensuremath{\mathbb{P}}}
\newcommand{\X}{\mathcal{X}}
\newcommand{\Y}{\mathcal{Y}}
\newcommand{\Exp}{\ensuremath{\mathbb{E}}}
\newcommand{\Exs}{\Exp}
\newcommand{\ExpModel}{\ensuremath{\Exp_m}}
\newcommand{\DisExpModel}{\ensuremath{\hat{\Exp}_m}}
\newcommand{\real}{\ensuremath{\mathbb{R}}}
\newcommand{\ProbModel}{\ensuremath{\Prob_m}}
\newcommand{\DisProbModel}{\ensuremath{\hat{\Prob}_m}}

\newcommand{\defn}{\ensuremath{: \, = }}

\newcommand{\quoting}[1]{\ensuremath{\mbox{``\emph{#1}''}}}

\makeatletter
\long\def\@makecaption#1#2{
        \vskip 0.8ex
        \setbox\@tempboxa\hbox{\small {\bf #1:} #2}
        \parindent 1.5em  
        \dimen0=\hsize
        \advance\dimen0 by -3em
        \ifdim \wd\@tempboxa >\dimen0
                \hbox to \hsize{
                        \parindent 0em
                        \hfil 
                        \parbox{\dimen0}{\def\baselinestretch{0.96}\small
                                {\bf #1.} #2
                                } 
                        \hfil}
        \else \hbox to \hsize{\hfil \box\@tempboxa \hfil}
        \fi
        }
\makeatother

\makeatletter
\newcommand\footnoteref[1]{\protected@xdef\@thefnmark{\ref{#1}}\@footnotemark}
\makeatother

\newcommand\independent{\protect\mathpalette{\protect\independenT}{\perp}}
\def\independenT#1#2{\mathrel{\rlap{$#1#2$}\mkern2mu{#1#2}}}
\newtheorem{theorem}{Theorem}
\newtheorem{definition}{Definition}
\newtheorem{lemma}{Lemma}
\makeatletter \renewcommand*{\@fnsymbol}[1]{\ensuremath{\ifcase#1\or
    \dagger\or \dagger\or \ddagger\or \mathsection\or
    \mathparagraph\or \|\or **\or \dagger\dagger \or \ddagger\ddagger
    \else\@ctrerr\fi}} \makeatother


\newcommand{\xbar}{\ensuremath{\bar{x}}}
\newcommand{\widgraph}[2]{\includegraphics[keepaspectratio,width=#1]{#2}}


\begin{document}

\author{ Jianbo Chen$^{*}$ \hspace{3mm} Le
  Song$^{\ddag,\S}$\hspace{3mm} Martin J. Wainwright$^{*,
    \diamond}$\hspace{3mm} Michael I. Jordan$^*$ } 

    \date{University
  of California, Berkeley$^{*}$\\ Georgia Institute of
  Technology$^\ddag$\\ Ant Financial$^\S$ \qquad Voleon
  Group$^\diamond$ } 

  \maketitle







\begin{abstract} 
We study instancewise feature importance scoring as a method for model
interpretation.  Any such method yields, for each predicted instance,
a vector of importance scores associated with the feature vector.
Methods based on the Shapley score have been proposed as a fair way of
computing feature attributions of this kind, but incur an exponential
complexity in the number of features.  This combinatorial explosion
arises from the definition of the Shapley value and prevents these
methods from being scalable to large data sets and complex models. We
focus on settings in which the data have a graph structure, and the
contribution of features to the target variable is well-approximated
by a graph-structured factorization.  In such settings, we develop two
algorithms with linear complexity for instancewise feature importance
scoring.  We establish the relationship of our methods to the Shapley
value and another closely related concept known as the Myerson value
from cooperative game theory. We demonstrate on both language and
image data that our algorithms compare favorably with other methods
for model interpretation.
\end{abstract}


\section{Introduction} 


\setlength{\abovedisplayskip}{3pt}
\setlength{\abovedisplayshortskip}{1pt}
\setlength{\belowdisplayskip}{3pt}
\setlength{\belowdisplayshortskip}{1pt}
\setlength{\jot}{3pt}
\setlength{\textfloatsep}{3pt}  

Modern machine learning models, including random forests, deep neural
networks, and kernel methods, can produce high-accuracy prediction in
many applications.  Often however, the accuracy in prediction from
such black box models, comes at the cost of interpretability.  Ease of
interpretation is a crucial criterion when these tools are applied in
areas such as medicine, financial markets, and criminal justice; for
more background, see the discussion paper by
Lipton~\cite{lipton2016mythos} as well as references therein.

In this paper, we study instancewise feature importance scoring as a
specific approach to the problem of interpreting the predictions of
black-box models.  Given a predictive model, such a method yields, for
each instance to which the model is applied, a vector of importance
scores associated with the underlying features.  The instancewise
property means that this vector, and hence the relative importance
of each feature, is allowed to vary across instances. Thus, the
importance scores can act as an explanation for the specific instance,
indicating which features are the key for the model to make its
prediction on that instance.

There is now a large body of research focused on the problem of
scoring input features based on the prediction of a given instance
(for instance, see the
papers~\cite{shrikumar2016not,bach2015pixel,ribeiro2016should,lundberg2017unified,vstrumbelj2010efficient,baehrens2010explain,datta2016algorithmic,sundararajan2017axiomatic}
as well as references therein).  Of most relevance to this paper is a
line of recent
work~\cite{vstrumbelj2010efficient,lundberg2017unified,datta2016algorithmic}
that has developed methods for model interpretation based on Shapley
value~\cite{shapley1953value} from cooperative game theory.  The
Shapley value was originally proposed as an axiomatic characterization
of a fair distribution of a total surplus from all the players, and
can be applied in to predictive models, in which case each feature is
modeled as a player in the underlying game.  While the Shapley value
approach is conceptually appealing, it is also computationally
challenging: in general, each evaluation of a Shapley value requires
an exponential number of model evaluations.  Different approaches to
circumventing this complexity barrier have been proposed, including
those based on Monte Carlo
approximation~\cite{vstrumbelj2010efficient,datta2016algorithmic} and
methods based on sampled least-squares with
weights~\cite{lundberg2017unified}.

In this paper, we take a complementary point of view, arguing that the
problem of explanation is best approached within a model-based
paradigm.  In this view, explanations are cast in terms of a model,
which may or may not be the same model as used to fit the
data. Criteria such as Shapley value, which are intractable to compute
when no assumptions are made, can be more effectively computed or
approximated within the framework of a model. We focus specifically on
settings in which a graph structure is appropriate for the data;
specifically, we consider simple chains and grids, appropriate for
time series and images, respectively. We propose two measures for
instancewise feature importance scoring in this framework, which we
term \emph{L-Shapley} and \emph{C-Shapley}; here the abbreviations
``L" and ``C" refer to ``local'' and ``connected,'' respectively.  By
exploiting the underlying graph structure, the number of model
evaluations is reduced to linear---as opposed to exponential---in the
number of features. We demonstrate the relationship of these measures
with a constrained form of Shapley value, and we additionally relate
C-Shapley with another solution concept from cooperative game theory,
known as the Myerson value~\cite{myerson1977graphs}.  The Myerson
value is commonly used in graph-restricted games, under a local
additivity assumption of the model on disconnected subsets of
features. Finally, we apply our feature scoring methods to several
state-of-the-art models for both language and image data, and find
that our scoring algorithms compare favorably to several existing
sampling-based algorithms for instancewise feature importance scoring.

The remainder of this paper is organized as follows.  We begin in
Section~\ref{SecBackground} with background and set-up for the problem
to be studied.  In Section~\ref{SecMethods}, we describe the two
methods proposed and analyzed in this paper, based on the L-Shapley
and C-Shapley scores.  Section~\ref{sec:prop} is devoted to a study of
the relationship between these scores and the Myerson value.  In
Section~\ref{SecExperiments}, we evaluate the performance of L-Shapley
and C-Shapley on various real-world data sets, and we conclude with a
discussion in Section~\ref{SecDiscussion}.


\section{Background and preliminaries}
\label{SecBackground}

We begin by introducing some background and notation.

\subsection{Importance of a feature subset}
\label{sec:importance}

We are interested in studying models that are trained to perform
prediction, taking as input a feature vector \mbox{$x \in \X \subset
  \real^d$} and predicting a response or output variable $y \in \Y$.
We assume access to the output of a model via a conditional
distribution, denoted by $\ProbModel(\cdot |x)$, that provides the
distribution of the response $Y \in \Y$ conditioned on a given vector
$X = x$ of inputs.  For any given subset $S \subset \{1, 2, \ldots, d
\}$, we use $x_S = \{ x_j, j \in S \}$ to denote the associated
sub-vector of features, and we let $\ProbModel(Y \mid x_S)$ denote the
induced conditional distribution when $\ProbModel$ is restricted to
using only the sub-vector $x_S$.  In the cornercase in which $S =
\emptyset$, we define \mbox{$\ProbModel(Y \mid x_\emptyset)\defn
  \ProbModel(Y)$.}  In terms of this notation, for a given feature
vector $x \in \X$, subset $S$ and fitted model distribution
$\ProbModel(Y \mid x)$, we introduce the \emph{importance score}
\begin{align*}
v_x(S) \defn \ExpModel \left[ -\log\frac{1}{\ProbModel(Y \mid x_S)} \;
  \Big| \; x \right],
\end{align*}
where $\ExpModel[\cdot \mid x]$ denotes the expectation over
$\ProbModel(\cdot \mid x)$.  The importance score $v_x(S)$ has a
coding-theoretic interpretation: it corresponds to the negative of the
expected number of bits required to encode the output of the model
based on the sub-vector $x_S$.  It will be zero when the model makes a
deterministic prediction based on $x_S$, and larger when the model
returns a distribution closer to uniform over the output space.

There is also an information-theoretic interpretation to this
definition of importance scores, as discussed in our previous work~\cite{chen2018learning}.  In particular, suppose that for a given
integer $k<d$, there is a function $x \mapsto S^*(x)$ such that, for
all almost all $x$, the $k$-sized subset $S^*(x)$ maximizes $v_x(S)$
over all subsets of size $k$.  In this case, we are guaranteed that
the mutual information $I(X_{S^*(X)},Y)$ between $X_{S^*(X)}$ and $Y$
is maximized, over any conditional distribution that generates a
subset of size $k$ given $X$. The converse is also true.

In many cases, class-specific importance is favored, where one is
interested in seeing how important a feature subset $S$ is to the
predicted class, instead of the prediction as a conditional
distribution. In order to handle such cases, it is convenient to
introduce the degenerate conditional distribution
\begin{align*}
\DisProbModel(y\mid x) & \defn \begin{cases} 1 \text{ if } y \in \arg
  \max \limits_{y'} \ProbModel(y'\mid x), \\ 0 \text{ otherwise.}
\end{cases}
\end{align*}
We can then define the importance of a subset $S$ with respect to
$\DisProbModel$ using the modified score
\begin{align*}
 v_x(S) &\defn \DisExpModel \left[ -\log\frac{1}{\ProbModel(Y \mid
     x_S)} \; \Big| \; x \right],
 \end{align*} 
which is the expected log probability of the predicted class given the
features in $S$.


\paragraph{Estimating the conditional distribution:}

In practice, we need to estimate---for any given feature vector $\xbar
\in \X$---the conditional probability functions $\ProbModel(y \mid
\xbar_S)$ based on observed data.  Past work has used one of two
approaches: either estimation based on empirical
averages~\cite{vstrumbelj2010efficient}, or plug-in estimation using a
reference point~\cite{datta2016algorithmic,lundberg2017unified}. \\

\noindent \emph{Empirical average estimation}: In this approach, we
first draw a set of feature vector $\{x^j\}_{j=1}^M$ by sampling with
replacement from the full data set.  For each sample $x^j$, we define
a new vector $\tilde x^j \in \real^d$ with components
\begin{align*}
  (\tilde x_j)_i & = \begin{cases} \xbar_i & \mbox{if $i \in S$, and} \\
    x^j_i & \mbox{otherwise.}
  \end{cases}
\end{align*}
Taking the empirical mean of $\ProbModel (y\mid \tilde x^j)$ over
$\{\tilde x^j\}$ is then used as an estimate of $\ProbModel (y\mid
\xbar_S)$.\\

\noindent \emph{Plug-in estimation}: In this approach, the first step
is to specify a reference vector $x^0\in\real^d$ is specified. We then
define the vector $\tilde x\in\real^d$ with components
\begin{align*}
  (\tilde x)_i & = \begin{cases} \xbar_i & \mbox{if $i \in S$, and} \\
    x^0_i & \mbox{otherwise}.
  \end{cases}
\end{align*}
Finally, we use the conditional probability $\ProbModel (y\mid \tilde
x)$ as an approximation to $\ProbModel (y\mid \xbar_S)$. The plug-in
estimate is more computationally efficient than the empirical average
estimator, and works well when there exist appropriate choices of
reference points. We use this method for our experiments, where we use
the index of padding for language data, and the average pixel strength
of an image for vision data. 

\subsection{Shapley value for measuring interaction between features}
\label{sec:shapleyvalue}

Consider the problem of quantifying the importance of a given feature
index $i$ for feature vector $x$.  A naive way of doing so would be by
computing the importance score $v_x(\{i\})$ of feature $i$ on its own.
However, doing so ignores interactions between features, which are
likely to be very important in applications.  As a simple example,
suppose that we were interested in performing sentiment analysis on
the following sentence:
\begin{align}
\tag{$\star$}\label{example}
\emph{It is not heartwarming or entertaining. It just sucks.}
\end{align}
This sentence is contained in a movie review from the IMDB movie data
set \cite{maas2011learning}, and it is classified as negative
sentiment by a machine learning model to be discussed in the
sequel. Now suppose we wish to quantify the importance of feature
\quoting{not} in prediction. The word \quoting{not} plays an important
role in the overall sentence as being classified as negative, and thus
should be attributed a significant weight.  However, viewed in
isolation, the word \quoting{not} has neither negative nor positive
sentiment, so that one would expect that $v_x(\{\quoting{not}\})
\approx 0$.

Thus, it is essential to consider the interaction of a given feature
$i$ with other features. For a given subset $S$ containing $i$, a
natural way in which to assess how $i$ interacts with the other
features in $S$ is by computing the difference between the importance
of all features in $S$, with and without $i$.  This difference is
called the \emph{marginal contribution} of $i$ to $S$, and given by
\begin{align}
m_x(S,i) \defn v_x(S) - v_x(S\setminus\{i\}).
\end{align}
In order to obtain a simple scalar measure for feature $i$, we need to
aggregate these marginal contributions over all subsets that contain
$i$.  The \emph{Shapley value}~\cite{shapley1953value} is one
principled way of doing so.  For each integer $k = 1, \ldots, d$, we
let $\mathcal{S}_k(i)$ denote the set of $k$-sized subsets that
contain $i$.  The Shapley value is obtained by averaging the marginal
contributions, first over the set $\mathcal{S}_k(i)$ for a fixed $k$,
and then over all possible choices of set size $k$:
\begin{align}
\label{eq:shapley}
\phi_x(\ProbModel,i) \defn \underbrace{\frac{1}{d}
  \sum_{k=1}^{d}}_{\mbox{Average over $k$}} \underbrace{
  \frac{1}{\binom{d-1}{k-1}} \sum_{S \in
    \mathcal{S}_k(i)}}_{\mbox{Average of over $\mathcal{S}_k(i)$}}
m_x(S,i).
\end{align}
Since the model $\ProbModel$ remains fixed throughout our analysis, we
frequently omit the dependence of $\phi_x$ on $\ProbModel$, instead
adopting the more compact notation $\phi_x(i)$.

The concept of Shapley value was first introduced in cooperative game
theory~\cite{shapley1953value}, and it has been used in a line of
recent work on instancewise feature importance
ranking~\cite{vstrumbelj2010efficient,datta2016algorithmic,lundberg2017unified}.
It can be justified on an axiomatic basis~\cite{shapley1953value,
  young1985monotonic} as being the unique function from a collection
of $2^d$ numbers (one for each subset $S$) to a collection of $d$
numbers (one for each feature $i$) with the following properties:
\begin{description}[style=unboxed,leftmargin=0cm]
\item[Additivity:] The sum of the Shapley values $\sum_{i=1}^d
  \phi_x(i)$ is equal to the difference $v_x(\{1, \ldots, d\}) -
  v_x(\emptyset)$.
\item[Equal contributions:]  If $v_x(S \cup \{i\}) = v_x(S \cup \{j\})$ for all
  subsets $S$, then $\phi_x(i) = \phi_x(j)$.
\item[Monotonicity:] Given two models $\ProbModel$ and $\ProbModel$, let $m_x$ and $m_x'$ denote the associated marginal contribution functions, and let $\phi_x$ and $\phi_x'$ denote the associated Shapley values. If $m_x(S, i) \geq m'_x(S, i)$ for all subsets $S$, then we are guaranteed that $\phi_x(i) \geq \phi'_x(i)$.
\end{description}
Note that all three of these axioms are reasonable in our feature
selection context.


\subsection{The challenge with computing Shapley values}

The exact computation of the Shapley value $\phi_x(i)$ takes into
account the interaction of feature $i$ with all $2^{d-1}$ subsets that
contain $i$, thereby leading to computational difficulties. Various
approximation methods have been developed with the goal of reducing
complexity. For example, \citet{vstrumbelj2010efficient} proposed to
estimate the Shapley values via a Monte Carlo approximation built on
an alternative permutation-based definition of the Shapley value.
\citet{lundberg2017unified} proposed to evaluate the model over
randomly sampled subsets and use a weighted linear regression to
approximate the Shapley values based on the collected model
evaluations.

In practice, such sampling-based approximations may suffer from high
variance when the number of samples to be collected per instance is
limited.  For large-scale predictive models, the number of features is
often relatively large, meaning that the number of samples required to
obtain stable estimates can be prohibitively large.  The main
contribution of this paper is to address this challenge in a
model-based paradigm, where the contribution of features to the
response variable respects the structure of an underlying graph.  In
this setting, we propose efficient algorithms and provide bounds on
the quality of the resulting approximation.  As we discuss in more
detail later, our approach should be viewed as complementary to
sampling-based or regresssion-based approximations of the Shapley
value.  In particular, these methods can be combined with the approach
of this paper so as to speed up the computation of the L-Shapley and
C-Shapley values that we propose.


\section{Methods}
\label{SecMethods}

In many applications, the features can be associated with the nodes of
a graph, and we can define distances between pairs of features based
on the graph structure.  More concretely, for sequence data (such as
language, music etc.), each feature vector $x$ can be associated with
a line graph, whereas for image data, each $x$ is naturally associated
with a grid graph.  In this section, we propose modified forms of the
Shapley values, referred to as L-Shapley and C-Shapley values, that
can be computed more efficiently than the Shapley value.  We also show
that under certain probabilistic assumptions on the marginal
distribution over the features, these quantities yield good
approximations to the original Shapley values.

More precisely, given feature vectors $x \in \real^d$, we let $G =
(V,E)$ denote a connected graph with nodes $V$ and edges $E\subset
V\times V$, where each feature $i$ is associated with a a node $i \in
V$, and edges represent interactions between features.  The graph
induces a distance function on $V \times V$, given by
\begin{align}
d_G(\ell, m) & = \mbox{number of edges in shortest path joining $\ell$
  to $m$}.
\end{align}
In the line graph, this graph distance corresponds to the number of
edges in the unique path joining them, whereas it corresponds to the
Manhattan distance in the grid graph.  For a given node $i \in V$, its
$k$-\emph{neighborhood} is the set
\begin{align}
  \nbhd{k}{i}
  & \defn \left \{ j \in V \mid d_G(i,j) \leq k \right \}
\end{align}
of all nodes at graph distance at most $k$.  See
Figure~\ref{FigGridStructure} for an illustration for the
two-dimensional grid graph.

\begin{figure}[h]
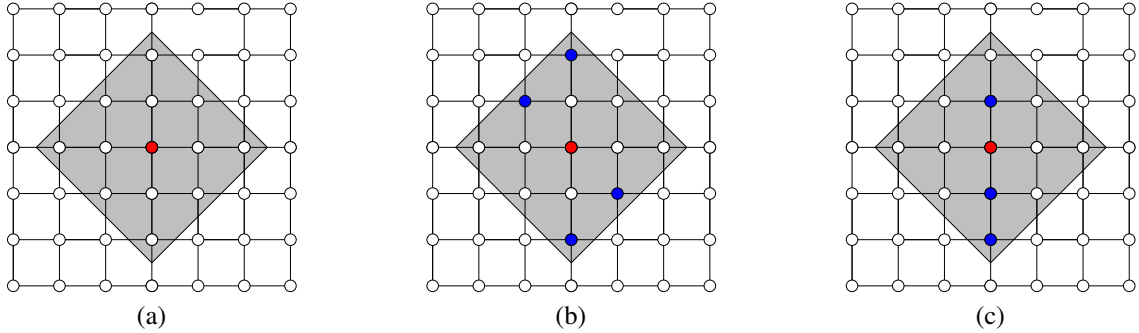

  \begin{center}
    \begin{tabular}{ccccc}
      \widgraph{0.22\textwidth}{grid_shapley} & \hspace*{.05\textwidth} &
      \widgraph{0.22\textwidth}{grid_shapley_dis} & \hspace*{.05\textwidth} &
      \widgraph{0.22\textwidth}{grid_shapley_con1} \\
      (a) && (b) && (c)
    \end{tabular}
    \caption{In all cases, the red node denotes the target feature
      $i$. (a) Illustration of the $k=2$ graph neighborhood
      $\nbhd{2}{i}$ on the grid graph.  All nodes within the shaded
      gray triangle lie within the neighborhood $\nbhd{2}{i}$. (b) A
      disconnected subset of $\nbhd{2}{i}$ that is summed over in
      L-Shapley but not C-Shapley.  (c) A connected subset of
      $\nbhd{2}{i}$ that is summed over in both L-Shapley and
      C-Shapley.}
    \label{FigGridStructure}
  \end{center}
\end{figure}

We propose two algorithms for the setting in which features that are
either far apart on the graph or features that are not directly
connected, have an accordingly weaker interaction.

\subsection{Local Shapley}

In order to motivate our first graph-structured Shapley score, let us
take a deeper look at Example~\eqref{example}. In order to compute the
importance score of \quoting{not,} the most important words to be
included are \quoting{heartwarming} and \quoting{entertaining.}
Intuitively, the words distant from them have a weaker influence on
the importance of a given word in a document, and therefore have
relatively less effect on the Shapley score.  Accordingly, as one
approximation, we propose the \text{L-Shapley} score, which only
perturbs the neighboring features of a given feature when evaluating
its importance:
\begin{definition}
\label{def:localshapley}
Given a model $\ProbModel$, a sample $x$ and a feature $i$, the
\emph{L-Shapley estimate of order $k$} on a graph $G$ is given by
\begin{align}
\hat\phi_x^k(i) & \defn \frac{1}{|\nbhd{k}{i}|} \sum_{ \substack{T \ni
    i \\T \subseteq \nbhd{k}{i} }}
\frac{1}{\binom{|\nbhd{k}{i}|-1}{|T|-1}} m_x(T,i).
\end{align}
\end{definition}
The coefficients in front of the marginal contributions of feature $i$
are chosen to match the coefficients in the definition of the Shapley
value restricted to the neighborhood $\nbhd{k}{i}$. We show in
Section~\ref{sec:prop} that this choice controls the error under
certain probabilistic assumptions. In practice, the choice of the
integer $k$ is dictated by computational considerations. By the
definition of $k$-neighborhoods, evaluating all $d$ L-Shapley scores
on a line graph requires $2^{2k} d$ model evaluations.  (In
particular, computing each feature takes $2^{2k+1}$ model evaluations,
half of which overlap with those of its preceding feature.) A similar
calculation shows that computing all $d$ L-Shapley scores on a grid
graph requires $2^{4k^2} d$ function evaluations.


%
%

\subsection{Connected Shapley}

We also propose a second algorithm, C-Shapley, that further reduces
the complexity of approximating the Shapley value. Coming back to
Example~\eqref{example} where we evaluate the importance of
\quoting{not,} both the L-Shapley estimate of order larger than two
and the exact Shapley value estimate would evaluate the model on the
word subset \quoting{It not heartwarming,} which rarely appears in
real data and may not make sense to a human or a model trained on
real-world data. The marginal contribution of \quoting{not} relative
to \quoting{It not heartwarming} may be well approximated by the
marginal contribution of ``not'' to \quoting{not heartwarming.} This
motivates us to proprose \emph{C-Shapley}:
\begin{definition}\label{def:connectedshapley}
Given a model $\ProbModel$, a sample $x$ and a feature $i$, the
\emph{C-Shapley estimate of order $k$} on a graph $G$ is given by
\begin{align}
  \label{eq:connected}
\tilde\phi_x^k(i) & \defn \sum_{U \in \mathcal{C}_k(i)}
\frac{2}{(|U|+2)(|U|+1)|U|} m_x(U,i),
\end{align}
where $\mathcal{C}_k(i)$ denotes the set of all subsets of
$\nbhd{k}{i}$ that contain node $i$, and are connected in the graph
$G$.  
\end{definition}
The coefficients in front of the marginal contributions are a result of using Myerson value to characterize a new coalitional game over the graph $G$, in which the influence of disconnected subsets of features are additive. The error between C-Shapley and the Shapley value can also be controlled under certain statistical assumptions. See Section~\ref{sec:prop} for details.

For text data, C-Shapley is equivalent to only evaluating n-grams in a
neighborhood of the word to be explained.  By the definition of
$k$-neighborhoods, evaluating the C-Shapley scores for all $d$
features takes $\mathcal O(k^2 d)$ model evaluations on a line graph,
as each feature takes $\mathcal O(k^2)$ model evaluations.





\section{Properties}
\label{sec:prop}

In this section, we study some basic properties of the L-Shapley and
C-Shapley values.  In particular, under certain probabilistic
assumptions on the features, we show that they provide good
approximations to the original Shapley values.  We also show their
relationship to another concept from cooperative game theory, namely
that of Myerson values, when the model satisfies certain local
additivity assumptions.


\subsection{Approximation of Shapley value}

In order to characterize the relationship between L-Shapley and the
Shapley value, we introduce \emph{absolute mutual information} as a
measure of dependence. Given two random variables $X$ and $Y$, the
absolute mutual information $I_a(X;Y)$ between $X$ and $Y$ is defined
as
\begin{align}
I_a(X;Y) = \Exs \left[ \Big|\log \frac{P(X,Y)}{P(X)P(Y)} \Big|
  \right],
\end{align}
where the expectation is taken jointly over $X,Y$. Based on the
definition of independence, we have $I_a(X;Y)=0$ if and only if
$X\independent Y$. Recall the mutual
information~\cite{cover2012elements} is defined as \mbox{$I(X;Y) =
  \Exs [\log \frac{P(X,Y)}{P(X)P(Y)}]$.}  The new measure is more
stringent than the mutual information in the sense that $I(X;Y)\leq
I_a(X;Y)$. The absolute conditional mutual information can be defined
in an analogous way. Given three random variables $X,Y$ and $Z$, we
define the absolute conditional mutual information to be $I_a(X;Y\mid
Z) = \Exs[|\log \frac{P(X,Y\mid Z)}{P(X\mid Z)P(Y\mid Z)}|]$,
where the expectation is taken jointly over $X,Y,Z$. Recall that
$I_a(X;Y\mid Z)$ is zero if and only if $X\independent Y|Z$.

Theorem~\ref{thm:localshapley} and Theorem~\ref{thm:connectedshapley}
show that L-Shapley and C-Shapley values, respectively, are related to the Shapley value whenever the model obeys a Markovian structure that is encoded by the graph.  We leave their proofs to Appendix~\ref{app:proof}. 
\begin{theorem}
\label{thm:localshapley}
Suppose there exists a feature subset $S\subset \nbhd{k}{i}$ with $i\in S$, such that 
\begin{align}
\sup_{\substack{U\subset S\setminus \{i\},
V\subset [d]\setminus S}}
I_a(X_i;X_{V}|X_U,Y)\leq\varepsilon; 
\sup_{\substack{U\subset S\setminus \{i\},
V\subset [d]\setminus S}}
I_a(X_i;X_{V}|X_U)\leq\varepsilon,
\label{eq:markov}
\end{align}
where we identify $I_a(X_i;X_{V}|X_\emptyset)$ with $I_a(X_i;X_{V})$ for notational convenience. 
Then the expected error between the L-Shapley estimate $\hat\phi_X^k(i)$ and the true Shapley-value-based importance score $\phi_i(\ProbModel,x)$ is bounded by $4\varepsilon$:
\begin{align}
\Exs_X|\hat\phi_X^k(i) - \phi_X(i)|\leq 4\varepsilon.
\end{align}
In particular, we have $\hat\phi_X^k(i) = \phi_X(i)$ almost surely if we have $X_i\independent X_{[d]\setminus S} | X_T$ and $X_i\independent X_{[d]\setminus S} | X_T,Y$ for any $T\subset S\setminus \{i\}$.
\end{theorem}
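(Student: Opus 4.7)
The plan is to bound the error by passing through the intermediate quantity
$\phi_X^{(S)}(i) \defn \sum_{T \subset S,\, i \in T} \frac{(|T|-1)!(|S|-|T|)!}{|S|!}\, m_X(T,i)$,
i.e., the Shapley value computed as if only the features in $S$ existed. I will bound $\Exs_X|\phi_X(i) - \phi_X^{(S)}(i)|$ and $\Exs_X|\hat\phi_X^k(i) - \phi_X^{(S)}(i)|$ each by $2\varepsilon$ and then appeal to the triangle inequality.

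Step 1 (combinatorial collapse). Write the generic Shapley weight as $w_n(T) = \frac{(|T|-1)!(n-|T|)!}{n!} = \int_0^1 t^{|T|-1}(1-t)^{n-|T|}\,dt$. For disjoint $T$ and $B$, the Beta representation together with the binomial theorem immediately yields $\sum_{W \subset B} w_{n+|B|}(T \cup W) = w_n(T)$. Applying this with $B = [d] \setminus S$ and with $B = \nbhd{k}{i} \setminus S$ respectively, I can rewrite both $\phi_X(i) - \phi_X^{(S)}(i)$ and $\hat\phi_X^k(i) - \phi_X^{(S)}(i)$ as weighted sums of increments $m_X(T_0 \cup V, i) - m_X(T_0, i)$, where $T_0 \subset S$ contains $i$ and $V$ ranges over subsets of $[d]\setminus S$ (for the full Shapley) or of $\nbhd{k}{i} \setminus S \subset [d] \setminus S$ (for the L-Shapley), with nonnegative weights summing to one.

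Step 2 (pointwise lemma). I claim $\Exs_X|m_X(T_0 \cup V, i) - m_X(T_0, i)| \leq 2\varepsilon$ for every admissible $T_0, V$. Setting $U \defn T_0 \setminus \{i\} \subset S \setminus \{i\}$ and applying Bayes' identity $\log \frac{\ProbModel(Y \mid x_{A \cup \{i\}})}{\ProbModel(Y \mid x_A)} = \log \frac{\ProbModel(x_i \mid x_A, Y)}{\ProbModel(x_i \mid x_A)}$ separately to the two marginal contributions rewrites
$m_X(T_0 \cup V, i) - m_X(T_0, i) = \Exs_{Y \mid X}\!\left[ \log \tfrac{P(X_i \mid X_U, X_V, Y)}{P(X_i \mid X_U, X_V)} - \log \tfrac{P(X_i \mid X_U, Y)}{P(X_i \mid X_U)} \right]$.
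A second Bayes rearrangement casts the bracketed integrand as the difference of two ``interaction logs'' $\log \tfrac{P(X_i, X_V \mid X_U, Y)}{P(X_i \mid X_U, Y)\, P(X_V \mid X_U, Y)}$ and $\log \tfrac{P(X_i, X_V \mid X_U)}{P(X_i \mid X_U)\, P(X_V \mid X_U)}$, whose absolute values, integrated against the joint law of $(X,Y)$, equal exactly $I_a(X_i; X_V \mid X_U, Y)$ and $I_a(X_i; X_V \mid X_U)$. Bounding each by $\varepsilon$ via hypothesis~\eqref{eq:markov} and combining via triangle inequalities (moving the absolute value inside the $Y$-expectation before splitting across the two logs) delivers $2\varepsilon$.

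Step 3 (assembly and almost-sure case). Combining Steps 1 and 2 with the fact that the weights sum to one gives $\Exs_X|\phi_X(i) - \phi_X^{(S)}(i)| \leq 2\varepsilon$ and $\Exs_X|\hat\phi_X^k(i) - \phi_X^{(S)}(i)| \leq 2\varepsilon$, so the triangle inequality yields the claimed $4\varepsilon$. Under exact conditional independence both interaction logs vanish pointwise, so $m_X(T_0 \cup V, i) = m_X(T_0, i)$ almost surely and therefore $\hat\phi_X^k(i) = \phi_X^{(S)}(i) = \phi_X(i)$ almost surely. I expect the main obstacle to be Step~2: one must arrange the Bayes manipulations so that the two terms, after taking absolute values, match the definition of the \emph{absolute} mutual information $I_a$ rather than the signed mutual information, which would not be controlled by the hypothesis; in particular the absolute value must be moved inside the $Y$-expectation before, not after, the second Bayes rearrangement that splits the difference into the two interaction logs.
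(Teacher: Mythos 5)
Your proof is correct and follows essentially the same route as the paper's: the same intermediate quantity (the Shapley value computed on the restricted ground set $S$, which the paper denotes $\hat\phi_X^S(i)$), the same collapse of the sums over outer features onto the restricted Shapley weights, the same Bayes-rule rewriting of $m_X(T_0\cup V,i)-m_X(T_0,i)$ as a difference of two interaction logs controlled by $I_a(X_i;X_V\mid X_U,Y)$ and $I_a(X_i;X_V\mid X_U)$, and the same $2\varepsilon+2\varepsilon$ triangle-inequality assembly. The only cosmetic difference is that you prove the combinatorial collapse via the Beta-integral representation of the Shapley weights, whereas the paper extracts coefficients from a negative-binomial generating function (its Lemma~1); both yield the identical identity.
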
 


\begin{theorem}
\label{thm:connectedshapley}
Suppose there exists a neighborhood $S\subset \nbhd{k}{i}$ of $i$, with $i\in S$, such that Condition~\ref{eq:markov} is satisfied. Moreover, for any connected subset $U\subset S$ with $i\in U$, we have
\begin{align}
\sup_{V\subset R(U)}I_a(X_i;X_{V}|X_{U\setminus \{i\}},Y)\leq\varepsilon;
\sup_{V\subset R(U)}I_a(X_i;X_{V}|X_{U\setminus \{i\}})\leq\varepsilon,
\label{eq:markov2}
\end{align}
 where $R(U)\defn \{i\in[d]-U:\text{ for any } j\in U, (i,j) \notin E \}$. Then the expected error between the C-Shapley estimate $\tilde\phi_X^k(i)$ and the true Shapley-value-based importance score $\phi_i(\ProbModel,x)$ is bounded by $6\varepsilon$:
\begin{align}
\Exs_X|\tilde\phi_X^k(i) - \phi_X(i)|\leq 6\varepsilon.
\end{align}
In particular, we have $\hat\phi_X^{d}(i) = \phi_X(i)$ almost surely if we have $X_i\independent X_{R(U)} | X_{U\setminus \{i\}}$ and $X_i\independent X_{R(U)} | X_{U\setminus \{i\}},Y$ for any $U\subset [d]$.
\end{theorem}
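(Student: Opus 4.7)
The plan is to pivot through the L-Shapley estimate, since the target bound $6\varepsilon$ splits cleanly as $4\varepsilon + 2\varepsilon$. The first hypothesis~\eqref{eq:markov} is exactly the hypothesis of Theorem~\ref{thm:localshapley}, which immediately yields $\Exs_X|\hat\phi_X^k(i) - \phi_X(i)| \leq 4\varepsilon$. What remains is to prove $\Exs_X|\tilde\phi_X^k(i) - \hat\phi_X^k(i)| \leq 2\varepsilon$ from the additional hypothesis~\eqref{eq:markov2}; the triangle inequality then closes the argument.

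To compare the two graph-restricted estimates, I would reorganize the L-Shapley sum by the graph-connected component of $i$. Every subset $T \subseteq \nbhd{k}{i}$ containing $i$ decomposes uniquely as $T = U \sqcup W$, where $U$ is the connected component of $i$ in the subgraph induced by $T$ and $W = T \setminus U$. By construction $U$ is a connected subset of $\nbhd{k}{i}$ containing $i$ (hence $U \in \mathcal{C}_k(i)$), while no element of $W$ is adjacent to $U$ in $G$, so $W \subseteq R(U) \cap \nbhd{k}{i}$. This recasts $\hat\phi_X^k(i)$ as a double sum in $(U,W)$ with marginal-contribution terms $m_X(U \cup W, i)$. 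Now invoke~\eqref{eq:markov2}: the two absolute mutual information bounds imply, via the same log-ratio manipulations used in the proof of Theorem~\ref{thm:localshapley}, that $\Exs_X|v_X(U\cup W) - v_X(U)| \leq \varepsilon$ and $\Exs_X|v_X((U\cup W)\setminus\{i\}) - v_X(U\setminus\{i\})| \leq \varepsilon$, and hence $\Exs_X|m_X(U\cup W,i) - m_X(U,i)| \leq 2\varepsilon$. Since the L-Shapley weights are nonnegative and sum to one, the total expected replacement error is at most $2\varepsilon$.

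After the replacement, each connected $U$ receives the aggregated weight $\sum_{W \subseteq R(U) \cap \nbhd{k}{i}} \bigl(|\nbhd{k}{i}|\binom{|\nbhd{k}{i}|-1}{|U|+|W|-1}\bigr)^{-1}$, and one must verify that this equals the C-Shapley coefficient $2/[(|U|+2)(|U|+1)|U|]$. Using the Beta-integral identity $\frac{(a-1)!(b-1)!}{(a+b-1)!} = \int_0^1 t^{a-1}(1-t)^{b-1}\,dt$ together with the binomial theorem, the sum over $W$ collapses to a single Beta integral that depends only on $|U|$ and the size $b(U)$ of the graph-theoretic boundary of $U$ in $\nbhd{k}{i}$; in the regime $b(U) = 2$, which holds for interior intervals on a line graph and the analogous regime on a grid, the integral evaluates to $2/[(|U|+2)(|U|+1)|U|]$, as required. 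The almost-sure equality claim in the final sentence follows by specializing to $\varepsilon = 0$, which makes both absolute mutual informations vanish and turns every approximation into an exact equality. The probabilistic piece is a routine adaptation of the Theorem~\ref{thm:localshapley} calculation, so the genuine difficulty is the combinatorial coefficient identification in this last step: the Beta-integral trick lets the graph enter only through $b(U)$, and verifying that this matches the C-Shapley weight in the relevant regular-graph regimes is precisely where the Myerson-value interpretation becomes essential.
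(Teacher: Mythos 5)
Your overall architecture is exactly the paper's: the paper's own proof of this theorem handles the general case $S\subset\nbhd{k}{i}$ by showing $\Exs|\tilde\phi_X^k(i)-\hat\phi_X^k(i)|\leq 2\varepsilon$, invoking Theorem~\ref{thm:localshapley} for the $4\varepsilon$ bound, and applying the triangle inequality; and the $2\varepsilon$ step is carried out (in the paper's ``Case 1'') by precisely your decomposition of each subset into the connected component $U$ of $i$ plus a remainder disconnected from $U$, with the aggregated coefficient computed by Lemma~\ref{lemma:comb} --- your Beta-integral identity is an equivalent derivation of that same combinatorial fact, and your observation that the identification with $2/[(|U|+2)(|U|+1)|U|]$ requires the boundary count $b(U)=2$ is, if anything, more careful than the paper, which silently uses $\binom{d-|U|-2}{i}$ as if every connected $U$ had exactly two neighbors.

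There is, however, one step that is wrong as stated. You claim that \eqref{eq:markov2} yields $\Exs_X|v_X(U\cup W)-v_X(U)|\leq\varepsilon$ and $\Exs_X|v_X((U\cup W)\setminus\{i\})-v_X(U\setminus\{i\})|\leq\varepsilon$ separately. Neither of these follows from the hypotheses: the log-ratio $\log\frac{\ProbModel(Y\mid X_{U\cup W})}{\ProbModel(Y\mid X_U)}$ telescopes to the pointwise mutual information between $X_W$ and $Y$ given $X_U$, i.e.\ a quantity controlled by $I_a(X_W;Y\mid X_U)$, which is not assumed to be small. What \eqref{eq:markov2} controls is the interaction between $X_i$ and $X_V$, and this only appears after combining all \emph{four} score terms: $m_X(U\cup W,i)-m_X(U,i)$ telescopes to the difference of the pointwise versions of $I_a(X_i;X_W\mid X_{U\setminus\{i\}},Y)$ and $I_a(X_i;X_W\mid X_{U\setminus\{i\}})$, whence the $2\varepsilon$ bound. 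Since you arrive at the correct inequality $\Exs_X|m_X(U\cup W,i)-m_X(U,i)|\leq 2\varepsilon$ and explicitly defer to the log-ratio manipulation of Theorem~\ref{thm:localshapley} (which is the four-term one), this is a repairable misstatement of the intermediate bookkeeping rather than a flaw in the strategy, but the two single-score differences should be deleted: they are not individually small.
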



\subsection{Relating the C-Shapley value to the Myerson value}
\label{sec:myerson}

Let us now discuss how the C-Shapley value can be related to the
Myerson value, which was introduced by~\citet{myerson1977graphs} as an
approach for characterizing a coalitional game over a graph $G$. Given
a subset of nodes $S$ in the graph $G$, let $\mathcal{C}_G(S)$ denote
the set of connected components of $S$---i.e., subsets of $S$ that are
connected via edges of the graph.  Thus, if $S$ is a connected subset
of $G$, then $\mathcal{C}_G(S)$ consists only of $S$; otherwise, it
contains a collection of subsets whose disjoint union is equal to $S$.

Consider a score function $T \mapsto v(T)$ that satisfies the
following decomposability condition: for any subset of nodes $S$, the
score $v(S)$ is equal to the sum of the scores over all the connected
components of $S$---viz.
\begin{align}
  \label{EqnDecompose}
v(S) & = \sum_{T\in \mathcal{C}_G(S)} v(T).
\end{align} 
For any such score function, we can define the associated Shapley
value, and it is known as the \emph{Myerson value} on $G$ with respect
to $v$. \citet{myerson1977graphs} showed that the Myerson value is the
unique quantity that satisfies both the decomposability property, as
well as the properties additivity, equal contributions and
monotonicity given in Section~\ref{sec:shapleyvalue}.

In our setting, if we use a plug-in estimate for conditional
probability, the decomposability condition~\eqref{EqnDecompose} is
equivalent to assuming that the influence of disconnected subsets of
features are additive at sample $x$, and C-Shapley of order $k=d$ is
exactly the Myerson value over $G$. In fact, if we partition each
subset $S$ into connected components, as in the definition of Myerson
value, and sum up the coefficients (using Lemma 1 in
Appendix~\ref{app:proof}), then the Myerson value is equivalent to
equation~\eqref{eq:connected}.


\subsection{Connections with related work}
\label{sec:related}

Let us now discuss connections with related work in more depth, and in
particular how methods useful for approximating the Shapley value can
be used to speed up the evaluation of approximate L-Shapley and C-Shapley
values.

\subsubsection{Sampling-based methods}

%
There is an alternative definition of the Shapley value based on
taking averages over permutations of the features.  In particular, the
contribution of a feature $i$ corresponds to the average of the
marginal contribution of $i$ to its preceding features over the set of
all permutations of $d$ features. Based on this definition,
\citet{vstrumbelj2010efficient} propose a Monte Carlo approximation,
based on randomly sampling permutations.

While L-Shapley is deterministic in nature, it is possible to combine
it with this and other sampling-based methods. For example, if one
hopes to consider the interaction of features in a large neighborhood
$\nbhd{k}{i}$ with a feature $i$, where exponential complexity in $k$
becomes a barrier, sampling based on random permutation of local
features may be used to alleviate the computational burden.


\subsubsection{Regression-based methods}

\citet{lundberg2017unified} proposed to sample feature subsets based
on a weighted kernel, and carry out a weighted linear regression to
estimate the Shapley value. Suppose the model is evaluated on $N$
feature subsets at $x$. In weighted least squares, each row of the
data matrix $X\in\{0,1\}^{N\times d}$ is a $d$-dimensional vector,
with the $j^{th}$ entry being one if the feature $j$ is selected, and
zero otherwise.  The response $F\in\real^N$ is the evaluation of the
model over feature subsets. The weight matrix $W$ is diagonal with
$W_{ii}=(d-1) / (\binom{d}{n_i}n_i(d-n_i))$ with $n_i=\sum_{j=1}^d
X_{ij}$.

\citet{lundberg2017unified} provide strong empirical results using
this regression-based approximation, referred to as KernelSHAP; in
particular, see Section~5.1 and Figure~3 of their paper.  We can
combine such a regression-based approximation with our modified
Shapley values to further reduce the evaluation complexity of the
C-Shapley values.  In particular, for a chain graph, we evaluate the
score function over all connected subsequences of length $\leq k$;
similarly, on a grid graph, we evaluate it over all connected squares
of size $\leq k\times k$.  Doing so yields a data matrix $X \in
\{0,1\}^{kd \times d}$ and a response vector $F\in\real^{kd}$, where
$X_{ij}=1$ if the $j$th feature is included in the $i$th sample, and
$F_i \defn v_x(S_i)$, the score function evaluated on the
corresponding feature subset. We use the solution to this weighted
least-squares problem as a regression-based estimate of
C-Shapley---that is, $\tilde \phi^k_x \approx (X^T W X)^{-1} X^T F$.

\section{Experiments}
\label{SecExperiments}

We evaluate the performance of L-Shapley and C-Shapley on real-world
data sets involving text and image classification. Codes for
reproducing the key results are available
online.\footnote{\label{ft:code}\url{https://github.com/Jianbo-Lab/LCShapley}}
We compare L-Shapley and C-Shapley with several competitive algorithms
for instancewise feature importance scoring on black-box models,
including the regression-based approximation known as
KernelSHAP~\cite{lundberg2017unified},
SampleShapley~\cite{vstrumbelj2010efficient} , and the LIME
method~\cite{ribeiro2016should}.  As discussed previously, KernelSHAP
forms a weighted regression-approximation of the Shapley values,
whereas SampleShapley estimates Shapley value by random permutation of
features. The LIME method uses a linear model to locally approximate
the original model through weighted least squares. For all methods,
the number of model evaluations is the same, and linear in the number
of features. We also choose the objective to be the log probability of
the predicted class, and use the plug-in estimate of conditional
probability across all methods (see Section~\ref{sec:importance}).

For image data, we also compare with Saliency
map~\cite{simonyan2013deep} as another baseline.  The Saliency method
is used for interpreting neural networks in computer vision, by
assuming knowledge of the gradient of a model with respect to the
input, and using the gradient magnitude as the importance score for
each pixel.

\subsection{Text Classification}

Text classification is a classical problem in natural language
processing, in which text documents are assigned to predefined
categories. We study the performance of L-Shapley and C-Shapley on
three popular neural models for text classification: word-based
CNNs~\cite{kim2014convolutional}, character-based
CNNs~\cite{zhang2015character}, and long-short term memory (LSTM)
recurrent neural networks~\cite{hochreiter1997long}, with the
following three data sets on different scales. See
Table~\ref{tab:dataset} for a summary, and Appendix~\ref{app:model}
for all of the details.

\begin{itemize} \item \textbf{IMDB Review with Word-CNN}: The Internet
  Movie Review Dataset (IMDB) is a dataset of movie reviews for
  sentiment classification \cite{maas2011learning}, which contains
  $50,000$ binary labeled movie reviews, with a split of $25,000$ for
  training and $25,000$ for testing. A simple word-based CNN model
  composed of an embedding layer, a convolutional layer, a max-pooling
  layer, and a dense layer is used, achieving an accuracy of $90.1\%$
  on the test data set.

\item \textbf{AG news with Char-CNN}: The AG news corpus is composed
  of titles and descriptions of $196,000$ news articles from $2,000$
  news sources~\cite{zhang2015character}. It is segmented into four
  classes, each containing $30,000$ training samples and $1,900$
  testing samples. Our character-based CNN has the same structure as
  that proposed in \citet{zhang2015character}. The model achieves an
  accuracy of $90.09\%$ on the test data set.

\item \textbf{Yahoo!\ Answers with LSTM}: The corpus of
  Yahoo!\ Answers Topic Classification Dataset is divided into ten
  categories, each class containing $140,000$ training samples and
  $5,000$ testing samples. Each input text includes the question
  title, content and best answer. We train a bidirectional LSTM which
  achieves an accuracy of $70.84\%$ on the test data set, close to the
  state-of-the-art accuracy of $71.2\%$ obtained by character-based
  CNNs \cite{zhang2015character}.
\end{itemize}

We choose zero paddings as the reference point for all methods, and
make $4\times d$ model evaluations, where $d$ is the number of words
for each input. Given the average length of each input (see
Table~\ref{tab:dataset}), this choice controls the number of model
evaluations under $1,000$, taking less than one second in TensorFlow
on a Tesla K80 GPU for all the three models. For L-Shapley, we are
able to consider the interaction of each word $i$ with the two
neighboring words in $\nbhd{1}{i}$ given the budget. For C-Shapley,
the budget allows the regression-based version to evaluate all
$n$-grams with $n\leq 4$.

The change in log-odds scores before and after masking the top
features ranked by importance scores is used as a metric for
evaluating performance, where masked words are replaced by zero
paddings. This metric has been used in previous literature in model
interpretation~\cite{shrikumar2016not,lundberg2017unified}. We study
how the average log-odds score of the predicted class decreases as the
percentage of masked features over the total number of features
increases on $1,000$ samples from the test set. Results are plotted in
Figure~\ref{fig:text_lor_mask}.

\begin{table}[bt!]
\centering \resizebox{0.88\textwidth}{!}{
 \begin{tabular}{||c|c|c|c|c|c|c|c||} 
 \hline
Data Set & Classes & Train Samples & Test Samples & Average \#w & Model & Parameters & Accuracy \\ [0.5ex] 
\hline\hline
IMDB Review \cite{maas2011learning} & 2 & 25,000 & 25,000 & 325.6 & WordCNN & 351,002& 90.1\% \\ 
AG's News \cite{zhang2015character} & 4 & 120,000 & 7,600 & 43.3& CharCNN & 11,337,988 & 90.09\%\\  
Yahoo!\ Answers \cite{zhang2015character} & 10 & 1,400,000 &  60,000 &108.4 & LSTM & 7,146,166 & 70.84\% \\ 
 \hline\hline
 \end{tabular} 
 } 
 \caption{A summary of data sets and models in three
   experiments. ``Average \#w'' is the average number of words per
   sentence. ``Accuracy'' is the model accuracy on test samples.}
\label{tab:dataset}

\end{table}

\begin{figure}[bt!]
\centering
\includegraphics[width=0.3\linewidth]{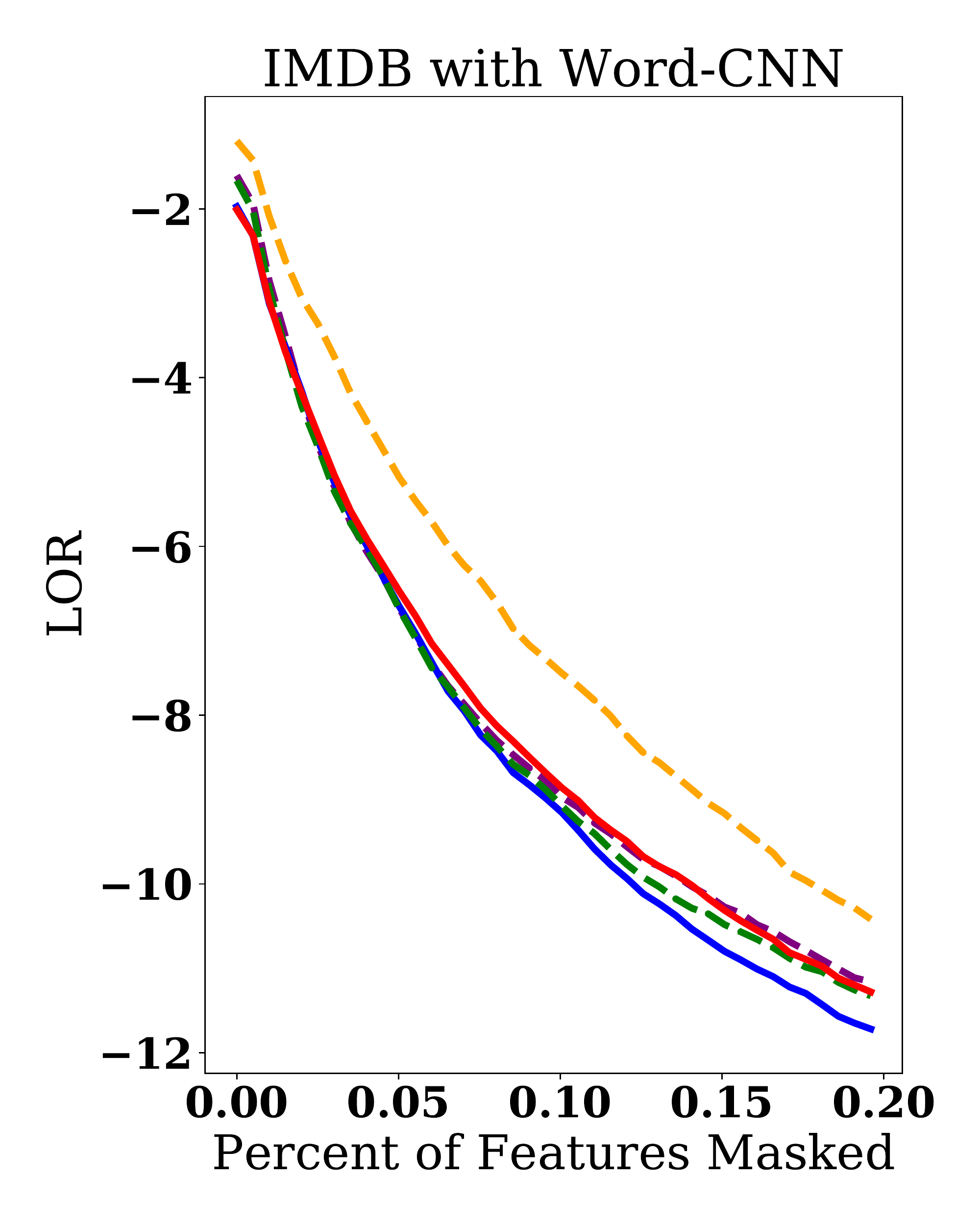}
\hspace*{-0.3cm}\includegraphics[width=0.3\linewidth]{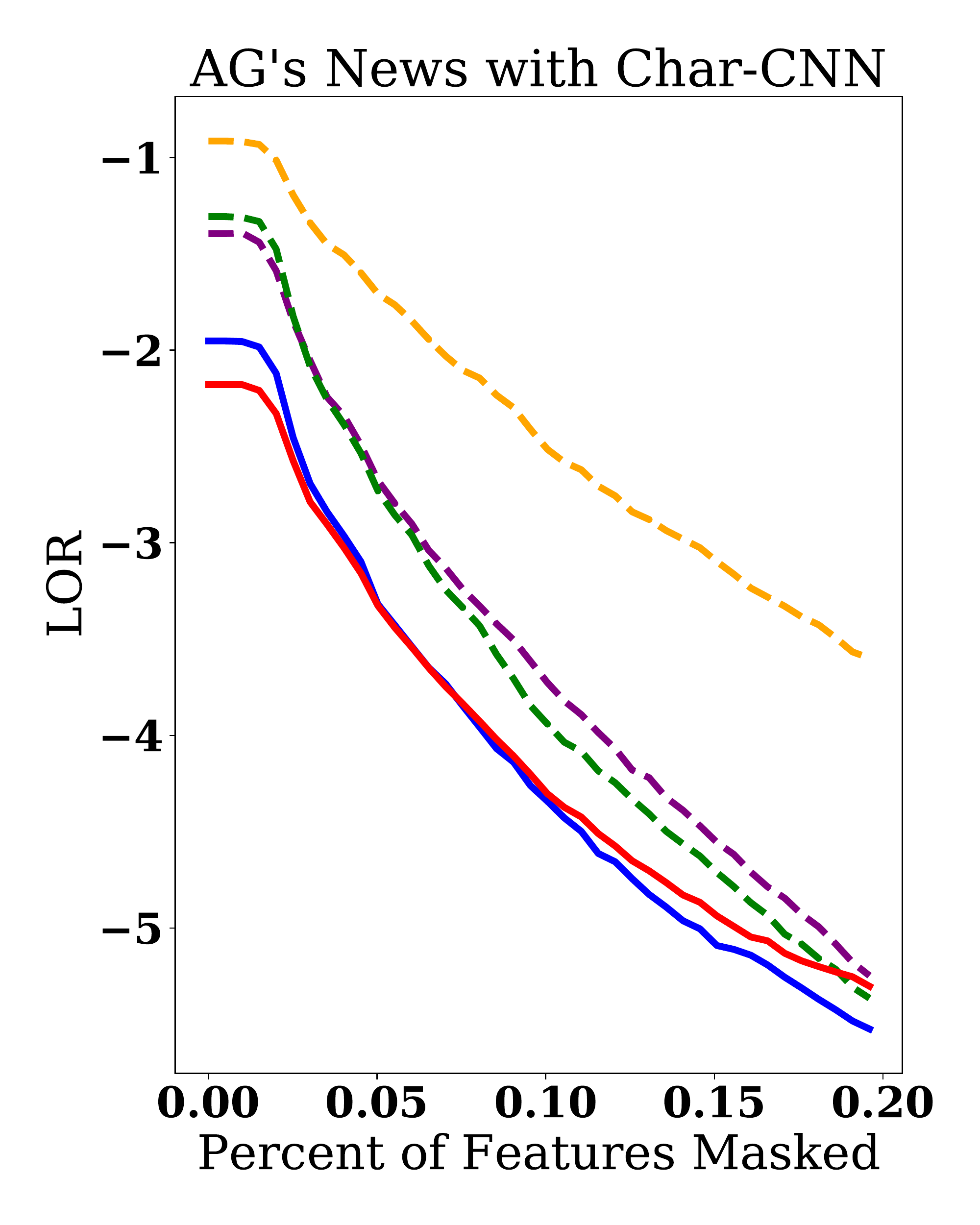}
\hspace*{-0.3cm}\includegraphics[width=0.3\linewidth]{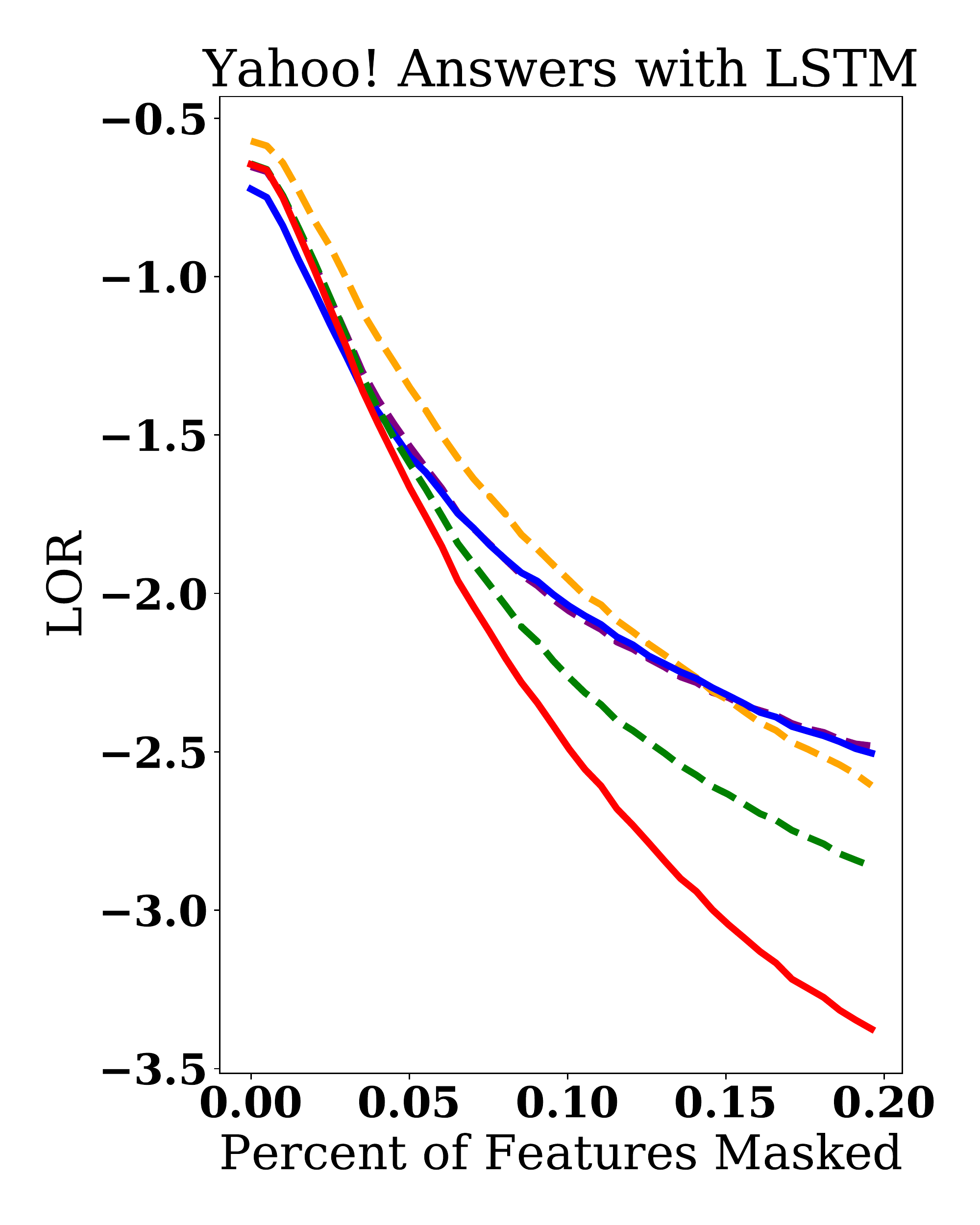} \\
\includegraphics[width=0.90\linewidth]{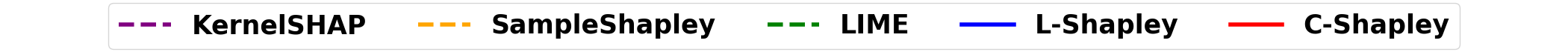}
\caption{The above plots show the change in log odds ratio of the
  predicted class as a function of the percent of masked features, on
  the three text data sets. Lower log odds ratios are better.}
\label{fig:text_lor_mask} 
\end{figure} 

\begin{table*}[bt!]
\centering
\resizebox{.67\textwidth}{!}{
 \begin{tabular}{||c | c||} 
 \hline
 Method & Explanation \\ [0.5ex] 
\hline\hline
Shapley& \colorbox[rgb]{1,0.92,0.92}{It} \colorbox[rgb]{0.97,0.97,1}{is} \colorbox[rgb]{0.3,0.3,1}{not} \colorbox[rgb]{1,0.65,0.65}{heartwarming} \colorbox[rgb]{0.8,0.8,1}{or} \colorbox[rgb]{1,0.5,0.5}{entertaining}. \colorbox[rgb]{1,0.81,0.81}{It} \colorbox[rgb]{0.77,0.77,1}{just} \colorbox[rgb]{0.0,0.0,1}{sucks}.\\ 
\hline
C-Shapley& \colorbox[rgb]{1,0.82,0.82}{It} \colorbox[rgb]{1,0.91,0.91}{is} \colorbox[rgb]{0.57,0.57,1}{not} \colorbox[rgb]{1,0.49,0.49}{heartwarming} \colorbox[rgb]{0.79,0.79,1}{or} \colorbox[rgb]{1,0.0,0.0}{entertaining}. \colorbox[rgb]{1,0.73,0.73}{It} \colorbox[rgb]{0.92,0.92,1}{just} \colorbox[rgb]{0.67,0.67,1}{sucks}.\\ 
\hline
L-Shapley& \colorbox[rgb]{1,0.89,0.89}{It} \colorbox[rgb]{1,0.93,0.93}{is} \colorbox[rgb]{0.26,0.26,1}{not} \colorbox[rgb]{1,0.72,0.72}{heartwarming} \colorbox[rgb]{0.36,0.36,1}{or} \colorbox[rgb]{1,0.0,0.0}{entertaining}. \colorbox[rgb]{1,0.39,0.39}{It} \colorbox[rgb]{0.96,0.96,1}{just} \colorbox[rgb]{0.65,0.65,1}{sucks}.\\ 
\hline
KernelSHAP& \colorbox[rgb]{1,0.71,0.71}{It} \colorbox[rgb]{0.95,0.95,1}{is} \colorbox[rgb]{0.48,0.48,1}{not} \colorbox[rgb]{1,1.0,1.0}{heartwarming} \colorbox[rgb]{0.78,0.78,1}{or} \colorbox[rgb]{1,0.15,0.15}{entertaining}. \colorbox[rgb]{1,1.0,1.0}{It} \colorbox[rgb]{1,1.0,1.0}{just} \colorbox[rgb]{0.0,0.0,1}{sucks}.\\ 
\hline
SampleShapley& \colorbox[rgb]{1,1.0,1.0}{It} \colorbox[rgb]{0.8,0.8,1}{is} \colorbox[rgb]{0.8,0.8,1}{not} \colorbox[rgb]{1,0.38,0.38}{heartwarming} \colorbox[rgb]{0.87,0.87,1}{or} \colorbox[rgb]{1,0.95,0.95}{entertaining}. \colorbox[rgb]{1,1.0,1.0}{It} \colorbox[rgb]{0.96,0.96,1}{just} \colorbox[rgb]{0.0,0.0,1}{sucks}.\\ 
\hline
\hline
 \end{tabular} 
 } 
 \caption{Each word is highlighted with the RGB color as a linear
   function of its importance score. The background colors of words
   with positive and negative scores are linearly interpolated between
   blue and white, red and white respectively.}
\label{tab:imdb_word_demo}
\end{table*}

\begin{figure}[!bt]
\centering
  \begin{tabular}[b]{ccc}
    \begin{subfigure}[b]{0.385\columnwidth}
      \hspace*{-0.0cm}\includegraphics[width=\textwidth]{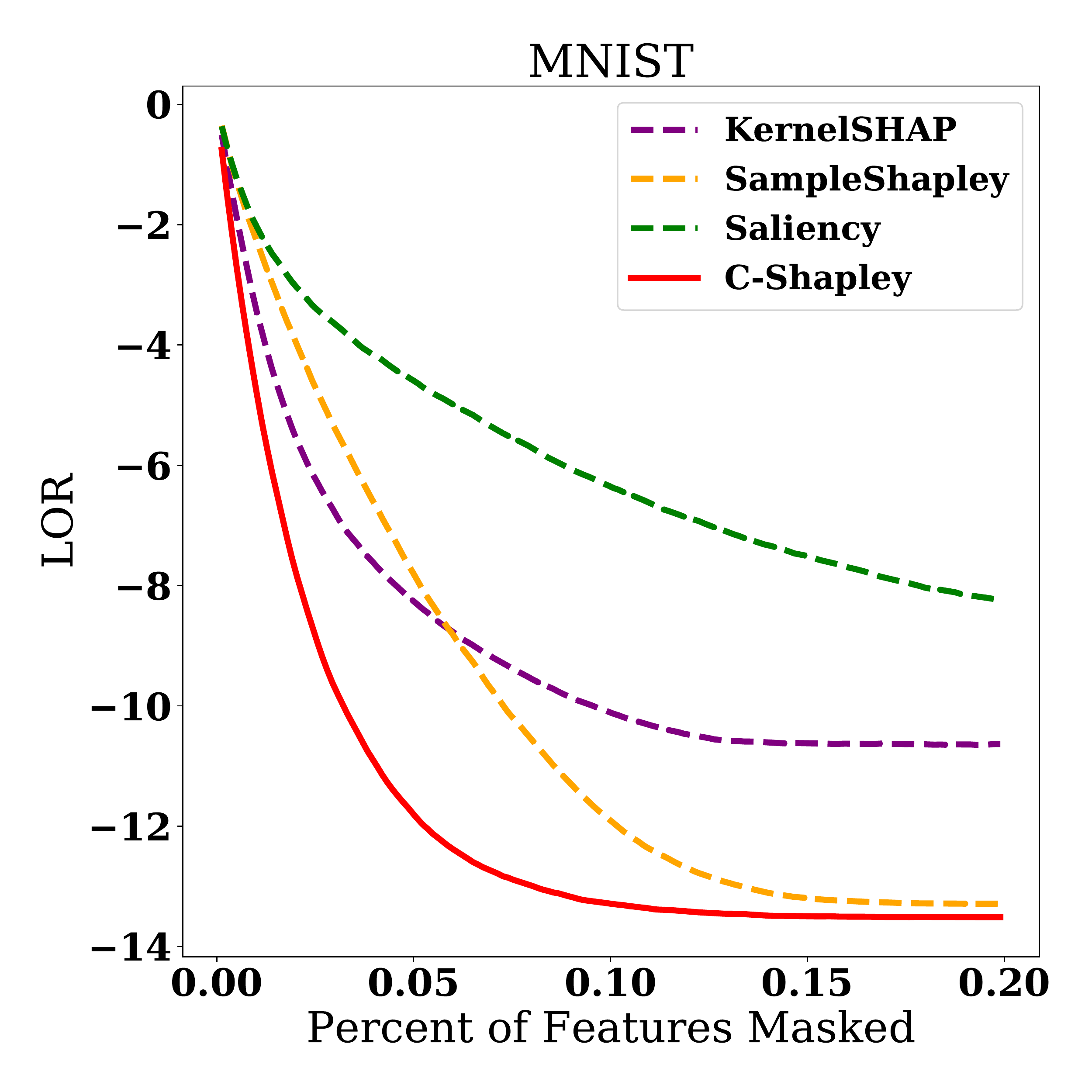}
    \end{subfigure}
    &
    \begin{subfigure}[b]{0.385\columnwidth}
      \hspace*{-0.4cm}\includegraphics[width=\textwidth]{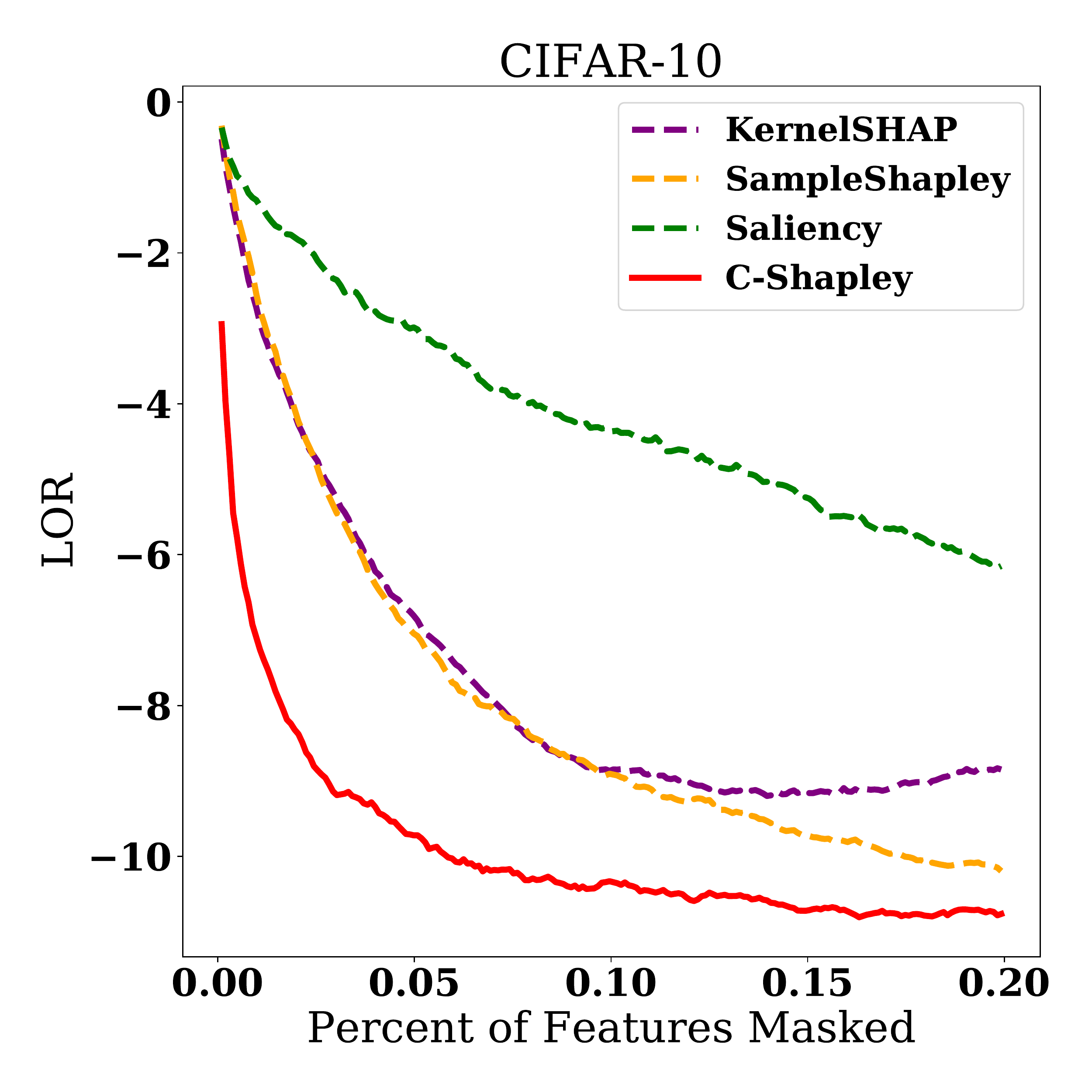}
    \end{subfigure}
    &
    \begin{tabular}[b]{c}
      \begin{subfigure}[b]{0.153\columnwidth}
        \hspace*{-0.5cm}\includegraphics[width=\textwidth]{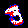}
      \end{subfigure}\\
      \begin{subfigure}[b]{0.153\columnwidth}
        \hspace*{-0.5cm}\raisebox{+.27\totalheight}{\includegraphics[width=\textwidth]{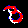}}
      \end{subfigure}
    \end{tabular}  
  \end{tabular} 
  \caption{Left and Middle: change in log-odds ratio vs. the percent
    of pixels masked on MNIST and CIFAR10. Right: top pixels ranked by
    C-Shapley for a ``$3$'' and an ``$8$'' misclassified into ``$8$''
    and ``$3$'' respectively. The masked pixels are colored with red
    if activated (white) and blue otherwise.}
  \label{fig:mix}
\end{figure}

On IMDB with Word-CNN, the simplest model among the three, L-Shapley,
achieves the best performance while LIME, KernelSHAP and C-Shapley
achieve slightly worse performance. On AG's news with Char-CNN,
L-Shapley and C-Shapley both outperform other algorithms. On
Yahoo!\ Answers with LSTM, C-Shapley outperforms the rest of the
algorithms by a large margin, followed by LIME. L-Shapley with order
$1$, SampleShapley, and KernelSHAP do not perform well for LSTM model,
probably because some of the signals captured by LSTM are relatively
long $n$-grams.

We also visualize the importance scores produced by different
Shapley-based methods on Example~\eqref{example}, which is part of a
negative movie review taken from IMDB. The result is shown in
Table~\ref{tab:imdb_word_demo}. More visualizations by our methods are
available online.\footnoteref{ft:code}


\subsection{Image Classification}

We carry out experiments in image classification on the MNIST and CIFAR10 data sets:

\begin{itemize}
\item \textbf{MNIST}: The MNIST data set contains $28\times 28$ images
  of handwritten digits with ten categories $0-9$
  \cite{lecun1998gradient}. A subset of MNIST data set composed of
  digits $3$ and $8$ is used for better visualization, with $12,000$
  images for training and $1,000$ images for testing. A simple CNN
  model achieves $99.7\%$ accuracy on the test data set.

\item \textbf{CIFAR10}: The CIFAR10 data set
  \cite{krizhevsky2009learning} contains $32\times 32$ images in ten
  classes. A subset of CIFAR10 data set composed of deers and horses
  is used for better visualization, with $10,000$ images for training
  and $2,000$ images for testing. A convolutional neural network
  modified from AlexNet \cite{krizhevsky2012imagenet} achieves
  $96.1\%$ accuracy on the test data set.
\end{itemize}

We take each pixel as a single feature for both MNIST and CIFAR10. We
choose the average pixel strength as the reference point for all
methods, and make $4\times d$ model evaluations, where $d$ is the
number of pixels for each input image, which keeps the number of model
evaluations under $4,000$.

LIME and L-Shapley are not used for comparison because LIME takes
``superpixels'' instead of raw pixels segmented by segmentation
algorithms as single features, and L-Shapley requires nearly sixteen
thousand model evaluations when applied to raw
pixels.\footnote{L-Shapley becomes practical if we take small patches
  of images instead of pixels as single features.} For C-Shapley, the
budget allows the regression-based version to evaluate all $n\times n$
image patches with $n\leq 4$.

Figure~\ref{fig:mix} shows the decrease in log-odds scores before and
after masking the top pixels ranked by importance scores as the
percentage of masked pixels over the total number of pixels increases
on $1,000$ test samples on MNIST and CIFAR10 data sets. C-Shapley
consistently outperforms other methods on both data sets.

Figure~\ref{fig:mnist} and Figure~\ref{fig:horse} provide additional
visualization of the results. By masking the top pixels ranked by
various methods, we find that the pixels picked by C-Shapley
concentrate around and inside the digits in MNIST. The C-Shapley and
Saliency methods yield the most interpretable results in CIFAR10. In
particular, C-Shapley tends to mask the parts of head and body that
distinguish deers and horses, and the human riding the horse.
Figure~\ref{fig:mix} shows two misclassified digits by the CNN
model. Interestingly, the top pixels chosen by C-Shapley visualize the
``reasoning'' of the model: more specifically, the important pixels to
the model are exactly those which could form a digit from the opposite
class.

\begin{figure}[!bt] 
\centering
\includegraphics[width=0.09\linewidth]{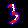}%
\includegraphics[width=0.09\linewidth]{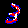}%
\includegraphics[width=0.09\linewidth]{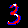}%
\includegraphics[width=0.09\linewidth]{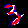}%
\includegraphics[width=0.09\linewidth]{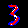}%
\includegraphics[width=0.09\linewidth]{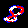}%
\includegraphics[width=0.09\linewidth]{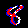}%
\includegraphics[width=0.09\linewidth]{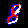}%
\includegraphics[width=0.09\linewidth]{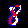}%
\includegraphics[width=0.09\linewidth]{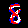}%
 
\includegraphics[width=0.09\linewidth]{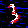}%
\includegraphics[width=0.09\linewidth]{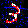}%
\includegraphics[width=0.09\linewidth]{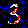}%
\includegraphics[width=0.09\linewidth]{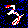}%
\includegraphics[width=0.09\linewidth]{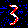}%
\includegraphics[width=0.09\linewidth]{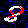}%
\includegraphics[width=0.09\linewidth]{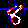}%
\includegraphics[width=0.09\linewidth]{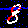}%
\includegraphics[width=0.09\linewidth]{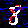}%
\includegraphics[width=0.09\linewidth]{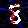}%

\includegraphics[width=0.09\linewidth]{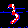}%
\includegraphics[width=0.09\linewidth]{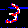}%
\includegraphics[width=0.09\linewidth]{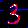}%
\includegraphics[width=0.09\linewidth]{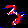}%
\includegraphics[width=0.09\linewidth]{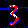}%
\includegraphics[width=0.09\linewidth]{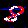}%
\includegraphics[width=0.09\linewidth]{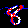}%
\includegraphics[width=0.09\linewidth]{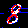}%
\includegraphics[width=0.09\linewidth]{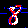}%
\includegraphics[width=0.09\linewidth]{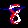}%

\includegraphics[width=0.09\linewidth]{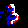}%
\includegraphics[width=0.09\linewidth]{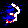}%
\includegraphics[width=0.09\linewidth]{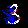}%
\includegraphics[width=0.09\linewidth]{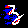}%
\includegraphics[width=0.09\linewidth]{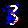}%
\includegraphics[width=0.09\linewidth]{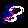}%
\includegraphics[width=0.09\linewidth]{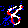}%
\includegraphics[width=0.09\linewidth]{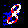}%
\includegraphics[width=0.09\linewidth]{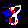}%
\includegraphics[width=0.09\linewidth]{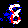}%
\caption{ Some examples of explanations obtained for the MNIST data
  set. The first row corresponds to the original images, with the rows
  below showing images masked based on scores produced by C-Shapley,
  KernelSHAP, SampleShapley and Saliency respectively. For best
  visualization results, 15\% and 20\% of the pixels are masked for
  each image. The masked pixels are colored with red if activated
  (white) and blue otherwise.  }
\label{fig:mnist} 
\end{figure}

\begin{figure}[!bt]
\centering   

\includegraphics[width=0.09\linewidth]{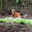}%
\includegraphics[width=0.09\linewidth]{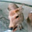}%
\includegraphics[width=0.09\linewidth]{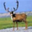}%
\includegraphics[width=0.09\linewidth]{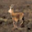}%
\includegraphics[width=0.09\linewidth]{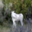}%
\includegraphics[width=0.09\linewidth]{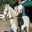}%
\includegraphics[width=0.09\linewidth]{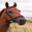}%
\includegraphics[width=0.09\linewidth]{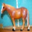}%
\includegraphics[width=0.09\linewidth]{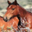}%
\includegraphics[width=0.09\linewidth]{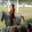}%

\includegraphics[width=0.09\linewidth]{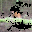}%
\includegraphics[width=0.09\linewidth]{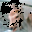}%
\includegraphics[width=0.09\linewidth]{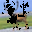}%
\includegraphics[width=0.09\linewidth]{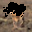}%
\includegraphics[width=0.09\linewidth]{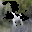}%
\includegraphics[width=0.09\linewidth]{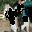}%
\includegraphics[width=0.09\linewidth]{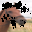}%
\includegraphics[width=0.09\linewidth]{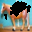}%
\includegraphics[width=0.09\linewidth]{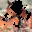}%
\includegraphics[width=0.09\linewidth]{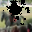}%

\includegraphics[width=0.09\linewidth]{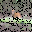}%
\includegraphics[width=0.09\linewidth]{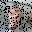}%
\includegraphics[width=0.09\linewidth]{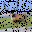}%
\includegraphics[width=0.09\linewidth]{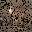}%
\includegraphics[width=0.09\linewidth]{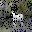}%
\includegraphics[width=0.09\linewidth]{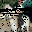}%
\includegraphics[width=0.09\linewidth]{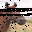}%
\includegraphics[width=0.09\linewidth]{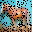}%
\includegraphics[width=0.09\linewidth]{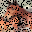}%
\includegraphics[width=0.09\linewidth]{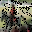}%

\includegraphics[width=0.09\linewidth]{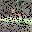}%
\includegraphics[width=0.09\linewidth]{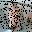}%
\includegraphics[width=0.09\linewidth]{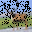}%
\includegraphics[width=0.09\linewidth]{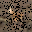}%
\includegraphics[width=0.09\linewidth]{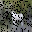}%
\includegraphics[width=0.09\linewidth]{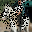}%
\includegraphics[width=0.09\linewidth]{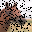}%
\includegraphics[width=0.09\linewidth]{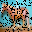}%
\includegraphics[width=0.09\linewidth]{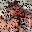}%
\includegraphics[width=0.09\linewidth]{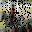}%

\includegraphics[width=0.09\linewidth]{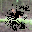}%
\includegraphics[width=0.09\linewidth]{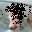}%
\includegraphics[width=0.09\linewidth]{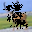}%
\includegraphics[width=0.09\linewidth]{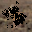}%
\includegraphics[width=0.09\linewidth]{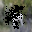}%
\includegraphics[width=0.09\linewidth]{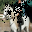}%
\includegraphics[width=0.09\linewidth]{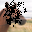}%
\includegraphics[width=0.09\linewidth]{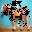}%
\includegraphics[width=0.09\linewidth]{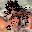}%
\includegraphics[width=0.09\linewidth]{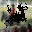}%

\caption{Some examples of explanations obtained for the CIFAR10 data
  set. The first row corresponds to the original images, with the rows
  below showing images masked based on scores produced by C-Shapley,
  KernelSHAP, SampleShapley and Saliency respectively. For best
  visualization results, 20\% of the pixels are masked for each
  image.}
\label{fig:horse} 
\end{figure}  


\section{Discussion}
\label{SecDiscussion}

We have proposed L-Shapley and C-Shapley for instancewise feature importance scoring, making use of a graphical representation of the data. We have shown the superior performance of the proposed algorithms compared to other methods for instancewise feature importance scoring in text and image classification.

\section*{Acknowledgments}

We would like to acknowledge support from the DARPA Program on
Lifelong Learning Machines from the Army Research Office under grant
number W911NF-17-1-0304, and from National Science Foundation grant
NSF-DMS-1612948.

\vspace*{.25cm}
\bibliography{explanation}
\bibliographystyle{plainnat} 


\appendix 

\section{Model structure}
\label{app:model}
\paragraph{IMDB Review with Word-CNN}

The word-based CNN model is composed of a $50$-dimensional word
embedding, a $1$-D convolutional layer of 250 filters and kernel size
three, a max-pooling and a $250$-dimensional dense layer as hidden
layers. Both the convolutional and the dense layers are followed by
ReLU as nonlinearity, and Dropout~\cite{srivastava2014dropout} as
regularization. The model is trained with rmsprop
\cite{hinton2012neural}. The model achieves an accuracy of $90.1\%$ on
the test data set.

\paragraph{AG's news with Char-CNN} The character-based CNN has
the same structure as the one proposed in \citet{zhang2015character},
composed of six convolutional layers, three max-pooling layers, and
two dense layers. The model is trained with SGD with momentum 0.9 and
decreasing step size initialized at $0.01$. (Details can be found in
\citet{zhang2015character}.) The model reaches accuracy of $90.09\%$
on the test data set.

\paragraph{Yahoo!\ Answers with LSTM}

The network consists of a $300$-dimensional randomly-initialized word
embedding, a bidirectional LSTM, each LSTM unit of dimension $256$,
and a dropout layer as hidden layers. The model is trained with
rmsprop \cite{hinton2012neural}. The model reaches accuracy of
$70.84\%$ on the test data set, close to the state-of-the-art accuracy
of $71.2\%$ obtained by character-based CNN \cite{zhang2015character}.

\paragraph{MNIST} A simple CNN model is trained on the data set, which
achieves $99.7\%$ accuracy on the test data set. It is composed of two
convolutional layers of kernel size $5\times 5$ and a dense linear
layer at last. The two convolutional layers contain 8 and 16 filters
respectively, and both are followed by a max-pooling layer of pool
size two.

\paragraph{CIFAR10} A convolutional neural network modified from
AlexNet~\cite{krizhevsky2012imagenet} is trained on the subset. It is
composed of six convolutional layers of kernel size $3\times 3$ and
two dense linear layers of dimension 512 and 256 at last. The six
convolutional layers contain 48,48,96,96,192,192 filters respectively,
and every two convolutional layers are followed by a max-pooling layer
of pool size two and a dropout layer. The CNN model is trained with
the Adam optimizer~\cite{kingma2014adam} and achieves $96.1\%$
accuracy on the test data set.


\section{Proof of Theorems}
\label{app:proof}

In this appendix, we collect the proofs of Theorems 1 and 2.

\subsection{Proof of Theorem 1}

We state an elementary combinatorial equality required for the proof
of the main theorem:
\begin{lemma}[A combinatorial equality] For any positive
  integer $n$, and any pair of non-negative integers with $s \geq t$,
  we have
\begin{align}
\sum_{j=0}^{n} \frac {1}{\binom {n+s}{j+t}} \binom
    {n}{j}=\frac{s+1+n}{(s+1)\binom{s}{t}} 
\end{align}
\label{lemma:comb}
\end{lemma}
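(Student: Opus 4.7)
\textbf{Proof proposal for Lemma~\ref{lemma:comb}.}

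The plan is to convert the reciprocal binomial coefficients into an integral via the Beta function, swap sum and integral, and then collapse the resulting sum using the binomial theorem. Concretely, recall that for nonnegative integers $a,b$,
\begin{align*}
\frac{1}{\binom{a+b}{a}} \;=\; (a+b+1)\int_0^1 x^a (1-x)^b\, dx,
\end{align*}
which is just the identity $B(a+1,b+1)=a!\,b!/(a+b+1)!$. Applying this with $a=j+t$ and $b=n+s-j-t$ (which is nonnegative provided $s\ge t$ and $0\le j\le n$, the range in which all terms are well defined), I get
\begin{align*}
\frac{1}{\binom{n+s}{j+t}} \;=\; (n+s+1)\int_0^1 x^{j+t}(1-x)^{n+s-j-t}\, dx.
\end{align*}

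Multiplying by $\binom{n}{j}$, summing over $j$, and interchanging sum with integral (a finite sum, so no subtlety), the integrand becomes $x^t(1-x)^{s-t}\sum_{j=0}^n \binom{n}{j} x^j (1-x)^{n-j}$. By the binomial theorem the inner sum equals $(x+(1-x))^n=1$, leaving
\begin{align*}
\sum_{j=0}^n \frac{\binom{n}{j}}{\binom{n+s}{j+t}} \;=\; (n+s+1)\int_0^1 x^t (1-x)^{s-t}\, dx \;=\; (n+s+1)\,B(t+1,s-t+1).
\end{align*}
Finally, $B(t+1,s-t+1)=t!(s-t)!/(s+1)!$, so the right-hand side simplifies to $(n+s+1)/\bigl((s+1)\binom{s}{t}\bigr)$, which is exactly the claimed identity.

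There isn't really a hard step here: the only thing to watch is that one checks the Beta-function formula applies for all $j$ in the summation range (equivalently, that $n+s-j-t\ge 0$, which follows from the implicit hypothesis that the binomial coefficients on the left-hand side are nonzero). An alternative route would be induction on $n$ using Pascal's rule $\binom{n+1}{j}=\binom{n}{j}+\binom{n}{j-1}$ together with an index shift, but the Beta-function derivation is shorter and more transparent, so that is the route I would write up.
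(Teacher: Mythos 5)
Your proof is correct. The Beta-function identity $\frac{1}{\binom{a+b}{a}}=(a+b+1)\int_0^1 x^a(1-x)^b\,dx$ is applied with valid parameters (the condition $n+s-j-t\ge 0$ does hold throughout the summation range, since $j\le n$ and $t\le s$), the interchange of the finite sum with the integral is harmless, and the collapse of $\sum_j\binom{n}{j}x^j(1-x)^{n-j}$ to $1$ via the binomial theorem is exactly the right move; the final evaluation $B(t+1,s-t+1)=t!(s-t)!/(s+1)!$ gives the claimed right-hand side. This is a genuinely different route from the paper's proof, which stays purely algebraic: there, one expands $\frac{1}{(1-x)^{t+1}}\cdot\frac{1}{(1-x)^{s-t+1}}=\frac{1}{(1-x)^{s+2}}$ as negative-binomial series, equates coefficients of $x^n$ to obtain the Vandermonde-type convolution $\sum_{j=0}^n\binom{j+t}{j}\binom{n-j+s-t}{n-j}=\binom{n+s+1}{n}$, and then rearranges factorials to recover the stated identity. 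Your argument buys brevity and transparency --- the sum disappears in one line rather than requiring a factorial manipulation at the end --- at the cost of importing a small piece of analysis (the Beta integral); the paper's version is entirely combinatorial and makes the connection to Vandermonde's identity explicit, which some readers may find more self-contained. Either write-up would be acceptable here.
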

\begin{proof} 
By the binomial theorem for negative integer exponents, we have
\begin{align*}
\frac{1}{(1-x)^{t+1}} = \sum_{j=0}^\infty \binom{j+t}{j}x^j.
\end{align*}
The identity can be found by examination of the coefficient of $x^n$ in the expansion of 
\begin{equation}\label{eq:nbt}
\frac{1}{(1-x)^{t+1}} \cdot \frac{1}{(1-x)^{s-t+1}} = \frac{1}{(1-x)^{s+1+1}}.
\end{equation}
In fact, equating the coefficients of $x^n$ in the left and the right hand sides, we get
\begin{equation}
\sum_{j=0}^n\binom{j+t}{j}\binom{(n-j)+(s-t)}{n-j} = \binom{n+s+1}{n} = \frac{n+s+1}{s+1}\binom{n+s}{n}.
\end{equation} 
Moving $\binom{n+s}{n}$ to the right hand side and expanding the binomial coefficients, we have
\begin{equation}
\sum_{j=0}^n\frac{(j+t)!}{j!t!}\cdot \frac{(n-j+s-t)!}{(n-j)!(s-t)!}\cdot \frac{n!s!}{(n+s)!}=\frac{n+s+1}{s+1},
\end{equation}
which implies
\begin{align*}
\sum_{j=0}^n \binom{n}{j}\binom{s}{t}\bigg /\binom{n+s}{j+t} &=\sum_{j=0}^n \frac{n!}{(n-j)!j!}\cdot\frac{s!}{t!(s-t)!}\cdot\frac{((n+s)-(j+t))!(j+t)!}{(n+s)!}\\
&= \sum_{j=0}^n\frac{(j+t)!}{j!t!}\cdot \frac{(n-j+s-t)!}{(n-j)!(s-t)!}\cdot \frac{n!s!}{(n+s)!}=\frac{n+s+1}{s+1}.
\end{align*} 
\end{proof}

Taking this lemma, we now prove the theorem. We split our analysis into two cases, namely  $S = \nbhd{k}{i}$  versus $S \subset \nbhd{k}{i}$. For notational convenience, we extend the definition of L-Shapley estimate for feature $i$ to an arbitrary feature subset $S$ containing $i$. In particular, we define 
\begin{align}
\hat\phi_x^S(i) & \defn \frac{1}{|S|} \sum_{ \substack{T \ni
    i \\T \subseteq S }}
\frac{1}{\binom{|S|-1}{|T|-1}} m_x(T,i).
\end{align}

\paragraph{Case 1:} First, suppose that
$S = \nbhd{k}{i}$. For any subset $A\subset [d]$, we introduce the shorthand
notation $U_S(A) \defn A \cap S$ and $V_S(A) \defn A\cap S^c$, and
note that $A = U_S(A) \cup V_S(A)$.  Recalling the definition of the
Shapley value, let us partition all the subsets $A$ based on $U_S(A)$,
in particular writing
\begin{align*}
\phi_X(i) & = \frac{1}{d} \sum_{ \substack{ A \subseteq [d] \\ A \ni
    i}} \frac{1}{\binom{d-1}{|A|-1}} m_X(A,i) \; = \; \frac{1}{d}
\sum_{\substack{U \subseteq S \\ U \ni i}} \sum_{ \substack{ A
    \subseteq [d] \\ U_S(A) = U}} \frac {1}{\binom{d-1}{|A|-1} }
m_X(A,i).
\end{align*}
Based on this partitioning, the expected error between $\hat
\phi_X^{S}(i)$ and $\phi_X(i)$ can be written as
\begin{align}
\label{eq:expected_local}  
\Exs \left| \hat \phi_X^{S}(i) - \phi_X(i) \right | & = \Exs \left |
\frac{1}{|S|} \sum_{ \substack{U \subseteq S \\ U \ni i}}
\frac{1}{\binom{|S|-1}{|U|-1}} m_X(U,i) - \frac{1}{d} \sum_{
  \substack{U \subseteq S \\ U \ni i }} \sum_{\substack{A \subseteq
    [d] \\ U_S(A) = U}} \frac {1}{\binom{d-1}{|A|-1}} m_X(A,i)
\right|.
\end{align}
Partitioning the set $\{A : U_S(A) = U\}$ by the size of $V_S(A) = A
\cap S^c$, we observe that
\begin{align*}
\sum_{ \substack{A \subseteq [d] \\ U_S(A) = U}}
\frac{1}{\binom{d-1}{|A|-1}} & = \sum_{i=0}^{d-|S|}
\frac{1}{\binom{d-1}{i+|U|-1}} \binom{d-|S|}{i} \\
%
%
& = \frac{(|S|-1) + 1 + (d-|S|)}{((|S|-1)+1)\binom{|S|-1}{|U|-1}}\\
& = \frac{d}{|S|} \frac{1}{\binom{|S|-1}{|U|-1}},
\end{align*}
where we have applied Lemma~\ref{lemma:comb} with $n = d - |S|$, $s =
|S| - 1$, and $t = |U| - 1$.  Substituting this equivalence into
equation~\eqref{eq:expected_local}, we find that the expected error
can be upper bounded by
\begin{align}
  \label{EqnEspresso}
  \Exs |\hat \phi_X^S(i) - \phi_X(i)| & \leq \frac{1}{d} \sum_{
    \substack{ U \subseteq S \\ U \ni i}} \sum_{ \substack{ A
      \subseteq [d] \\ U_S(A) = U}} \frac{1}{\binom{d-1}{|A|-1}} \Exs
  \left| m_X(U,i) - m_X(A,i) \right|,
\end{align}
where we recall that $A = U_S(A) \cup V_S(A)$.

Now omitting the dependence of $U_S(A),V_S(A)$ on $A$ for notational
simplicity, we now write the difference as
\begin{align*}
m_X(A,i) - m_X(U,i) & = \ExpModel \left [ \log \frac{\ProbModel
    (Y|X_{U\cup V})}{\ProbModel (Y|X_{U\cup V\setminus\{i\}})} - \log
  \frac {\ProbModel(Y|X_{U})}{\ProbModel(Y|X_{U\setminus\{i\}})} \mid
  X \right] \\
& = \ExpModel \left [ \log \frac {\Prob (Y,X_{U\setminus\{i\}})\Prob
    (X_U)P(X_{U\cup V\setminus\{i\}})P(X_{U\cup V}, Y)} {\Prob
    (Y,X_U)\Prob (X_{U\setminus\{i\}})P(X_{U\cup V})P(X_{U\cup
      V\setminus\{i\}}, Y)}\mid X \right] \\
& = \ExpModel \left[ \log \frac{\Prob(X_i,X_V \mid
    X_{U\setminus\{i\}},Y)}{\Prob(X_i \mid X_{U\setminus\{i\}},Y)
    \Prob(X_V \mid X_{U\setminus\{i\}},Y)} - \log \frac{\Prob
    (X_i,X_V|X_{U\setminus\{i\}})}{\Prob (X_i|X_{U\setminus\{i\}})
    \Prob (X_V \mid X_{U\setminus\{i\}})} \mid X \right].
\end{align*}
Substituting this equivalence into our earlier
bound~\eqref{EqnEspresso} and taking an expectation over $X$ on both
sides, we find that the expected error is upper bounded as
\begin{multline*}
  \Exs |\hat \phi_X^S(i) - \phi_X(i)| \leq \frac{1}{d} \sum_{
    \substack{ U \subseteq S \\ U \ni i}} \sum_{ \substack{A \subseteq
      [d] \\ U_S(A) = U}} \frac{1}{\binom{d-1}{|A|-1}} \Biggr \{ \Exs
  \left| \log \frac {\Prob(X_i,X_{V_S(A)}|X_{U\setminus
      \{i\}},Y)}{\Prob(X_i|X_{U\setminus
      \{i\}},Y)\Prob(X_{V_S(A)}|X_{U\setminus \{i\}},Y)}\right | \\
+ \Exs \left | \log \frac {\Prob (X_i,X_{V_S(A)}|X_{U\setminus
    \{i\}})}{\Prob (X_i \mid X_{U\setminus \{i\}})\Prob
  (X_{V_S(A)}|X_{U\setminus \{i\}})} \right| \Biggr \}.
\end{multline*}
Recalling the definition of the absolute mutual information, we see
that
\begin{align*}
  \Exs |\hat \phi_X^S(i) - \phi_X(i)| & \leq \frac{1}{d}
  \sum_{\substack{U \subseteq S \\ U \ni i}} \sum_{ \substack{A
      \subseteq [d] \\ U_S(A) = U}} \frac {1}{\binom{d-1}{|A|-1}} \Big
  \{ I_a(X_i; X_{V_S(A)} \mid X_{U\setminus \{i\}},Y) +
  I_a(X_i;X_{V_S(A)} \mid X_{U \setminus \{i\}}) \Big \} \\
& \leq 2 \varepsilon,
\end{align*}
which completes the proof of the claimed bound.

Finally, in the special case that $X_i\independent X_{[d]\setminus S}
| X_T$ and $X_i\independent X_{[d]\setminus S} | X_T,Y$ for any
$T\subset S$, then this inequality holds with $\varepsilon=0$, which
implies $\Exs |\hat \phi_X^S(i) - \phi_X(i)|=0$. Therefore, we have
$\hat \phi_X^S(i) = \phi_X(i)$ almost surely, as claimed.


\paragraph{Case 2:}  We now consider the general case in which $S\subset \nbhd{k}{i}$. Using the previous arguments, we
can show
\begin{align*}
\Exs |\hat \phi_X^{S}(i) - \phi_X^{k}(i)| \leq 2 \varepsilon, \quad
\mbox{and} \quad \Exs |\hat \phi_X^{S}(i) - \phi_X(i)| \leq 2
\varepsilon.
\end{align*}
Appylying the triangle inequality yields $\Exs |\hat \phi_X^{k}(i) -
\phi_X(i)| \leq 4 \varepsilon$, which establishes the claim.


\subsection{Proof of Theorem 2}


As in the previous proof, we divide our analysis into two cases.

\paragraph{Case 1:}  First, 
suppose that $S = \nbhd{k}{i} = [d]$. For any subset $A\subset S$ with $i\in
A$, we can partition $A$ into two components $U_S(A)$ and $V_S(A)$, such
that $i\in U_S(A)$ and $U_S(A)$ is a connected subsequence. $V_S(A)$ is
disconnected from $U_S(A)$. We also define
\begin{align}
\mathcal C = \{U\mid i\in U, U\subset [d], U \text{ is a connected subsequence.}\}
\end{align}
We partition all the subsets $A\subset S$ based on $U_S(A)$ in the definition of the Shapley value:
\begin{align*}
\phi_X(i) &=\frac 1 d \sum_{\substack{ A \subseteq S \\ A \ni i}}\frac {1}{\binom{d-1}{|A|-1}}m_X(A,i)\\
&=\frac 1 d\sum_{U\in\mathcal C} \sum_{A:U_S(A) = U}\frac {1}{\binom{d-1}{|A|-1}}m_X(A,i).
\end{align*}
The expected error between $\tilde \phi_X^{[d]}(i)$ and $\phi_X(i)$ is
\begin{align}
\Exs |\tilde \phi_X^{[d]}(i) - \phi_X(i)| = \Exs \left |\frac 1
        {d} \sum_{U\in \mathcal C}\frac{2d}{(|U|+2)(|U|+1)|U|}
        m_X(U,i) - \frac 1 d\sum_{U\in\mathcal C} \sum_{A:U_S(A) =
          U}\frac
        {1}{\binom{d-1}{|A|-1}}m_X(A,i)\right |. \label{eq:expected1}
\end{align}
Partitioning $\{A:U_S(A) = U\}$ by the size of $V_S(A)$, we observe that
\begin{align*}
\sum_{A:U_S(A) = U}\frac {1}{\binom{d-1}{|A|-1}} &=
\sum_{i=0}^{d-|U|-2}\frac {1}{\binom{d-1}{i+|U|-1}}\binom{d-|U|-2}{i}\\ 
&=\frac{(|U|+1)+1+(d-|U|-2)}{((|U|+1)+1)\binom{|U|+1}{|U|-1}}\\
   &=\frac{2d}{(|U|+2)(|U|+1)|U|},
\end{align*}
where we apply Lemma~\ref{lemma:comb} with $n=d-|U|-2$, $s=|U|+1$ and
$ t=|U|-1$. From equation~\eqref{eq:expected1}, the expected error can
be upper bounded by
\begin{align*}
\Exs \left| \tilde \phi_X^{[d]}(i) - \phi_X(i) \right| & \leq
\frac{1}{d} \sum_{U\in \mathcal C}\sum_{A:U_S(A) = U}
\frac{1}{\binom{d-1}{|A|-1}} \Exs \left| m_X(U,i) - m_X(A,i) \right|,
\end{align*}
where $A = U_S(A)\cup V_S(A)$. We omit the dependence of $U_S(A)$ and
$V_S(A)$ on the pair $(A,S)$ for notational simplicity, and observe
that the difference between $m_x(A,i)$ and $m_x(U,i)$ is
\begin{align*}
m_X(A,i) - m_X(U,i) & = \ExpModel \left [\log \frac{\ProbModel (Y|X_{U\cup
      V})}{\ProbModel (Y|X_{U \cup V\setminus\{i\}})} - \log \frac
  {\ProbModel(Y|X_{U})}{\ProbModel(Y|X_{U\setminus\{i\}})} \mid
  X\right ]\\
& = \ExpModel \left [\log \frac {\Prob (Y,X_{U\setminus\{i\}})\Prob
    (X_U)P(X_{U\cup V\setminus\{i\}})P(X_{U\cup V}, Y)} {\Prob
    (Y,X_U)\Prob (X_{U\setminus\{i\}})P(X_{U\cup V})P(X_{U\cup
      V\setminus\{i\}}, Y)}\mid X\right ] \\
& = \ExpModel \left [\log \frac
  {\Prob(X_i,X_V|X_{U\setminus\{i\}},Y)}{\Prob(X_i|X_{U\setminus\{i\}},Y)
    \Prob(X_V|X_{U\setminus\{i\}},Y)} - \log \frac {\Prob
    (X_i,X_V|X_{U\setminus\{i\}})}{\Prob (X_i|X_{U\setminus\{i\}})
    \Prob (X_V|X_{U\setminus\{i\}})}\mid X\right ].
\end{align*}
Taking an expectation over $X$ at both sides, we can upper bound the
expected error by
\begin{align*}
\Exs |\tilde \phi_X^{[d]}(i) - \phi_X(i)| & \leq \frac{1}{d}
\sum_{U\in \mathcal C}\sum_{A:U_S(A) = U} \frac
    {1}{\binom{d-1}{|A|-1}}(\Exs \left | \log \frac
    {\Prob(X_i,X_{V_S(A)}|X_{U\setminus
        \{i\}},Y)}{\Prob(X_i|X_{U\setminus
        \{i\}},Y)\Prob(X_{V_S(A)}|X_{U\setminus \{i\}},Y)}\right |\\ &\qquad
    \qquad \qquad \qquad \qquad \quad +\Exs \left |\log \frac {\Prob
      (X_i,X_{V_S(A)}|X_{U\setminus \{i\}})}{\Prob (X_i|X_{U\setminus
        \{i\}})\Prob (X_{V_S(A)}|X_{U\setminus \{i\}})}\right |) \\
& = \frac{1}{d} \sum_{U\in \mathcal C}\sum_{A:U_S(A) = U}\frac
           {1}{\binom{d-1}{|A|-1}}(I_a(X_i;X_{V_S(A)}|X_{U\setminus
             \{i\}},Y)+I_a(X_i;X_{V_S(A)}|X_{U\setminus \{i\}}))\\
& \leq 2 \varepsilon.
\end{align*}
Let $R(U) \defn [d] - U\cup\{\max(u-1,1),\min(u+l+1,d)\}$. If we have
$X_i\independent X_{R(U)} | X_{U\setminus \{i\}}$ and $X_i\independent
X_{R(U)} | X_{U\setminus \{i\}},Y$ for any $U\subset [d]$, then
$\varepsilon=0$, which implies $\Exs |\tilde \phi_X^{[d]}(i) -
\phi_X(i)|=0$. Therefore, we have $\tilde \phi_X^{[d]}(i) = \phi_X(i)$
almost surely.


\paragraph{Case 2:}  We now turn to the
general case $S\subset \nbhd{k}{i}\subset [d]$. Similar as above, we can show
\begin{align*}
\Exs |\tilde \phi_X^k(i) - \hat \phi_X^k(i)|\leq 2\varepsilon.
\end{align*}
Based on Theorem~1, we have
\begin{align*}
\Exs |\hat \phi_X^k(i) - \phi_X(i)| & \leq 4 \varepsilon.
\end{align*} 
Applying the triangle yields $\Exs |\tilde \phi_X^k(i) - \phi_X(i)|
\leq 6 \varepsilon$, which establishes the claim.

\end{document}